\newcommand{\reals}{\mathbb{R}}
\newcommand{\figref}[1]{Figure~\ref{#1}}
\newcommand{\cC}{\mathcal{C}}
\newcommand{\cR}{\mathcal{R}}
\newcommand{\cF}{\mathcal{F}}
\newcommand{\vv}{\boldsymbol{v}}
\newcommand{\zz}{\boldsymbol{z}}
\newcommand{\uu}{\boldsymbol{u}}
\newcommand{\phiv}{\boldsymbol{\phi}}
\newcommand{\psiv}{\boldsymbol{\psi}}
\newcommand{\ignorebig}[1]{}
\newcommand{\secref}[1]{Section~\ref{#1}}
\renewcommand{\eqref}[1]{Eq.~\ref{#1}}
\DeclareMathOperator{\E}{\mathbb{E}}
\newcommand{\normm}[1]{\left\|#1\right\|}
\newcommand{\hardmethod}{\text{SGC}}
\newcommand{\softslowmethod}{\text{WSGC-E}}
\newcommand{\softfastmethod}{\text{WSGC-S}}
\begin{document}
\pagestyle{headings}
\mainmatter
\def\ECCVSubNumber{5328}  % Insert your submission number here

\title{Learning Canonical Representations for \\ Scene Graph to Image Generation} % Replace with your title
% Supplementary Material for ``Learning Canonical Representations for Scene Graph to Image Generation''
% Learning Canonical Representations for \\ Scene Graph to Image Generation

% INITIAL SUBMISSION 
% \begin{comment}
% % \titlerunning{ECCV-20 submission ID \ECCVSubNumber} 
% % \authorrunning{ECCV-20 submission ID \ECCVSubNumber} 
% % \author{Anonymous ECCV submission}
% % \institute{Paper ID \ECCVSubNumber}

% \titlerunning{Learning Canonical Representations for Scene Graph to Image Generation} 
% \authorrunning{Herzig et al.}
% \author{}
% \institute{}

% \end{comment}
%******************

% CAMERA READY SUBMISSION
% \begin{comment}
\titlerunning{Learning Canonical Representations for Scene Graph to Image Generation}
% If the paper title is too long for the running head, you can set
% an abbreviated paper title here

\author{
Roei Herzig\inst{1\star\dagger}~~~
Amir Bar\inst{1\star}~~~ \\
Huijuan Xu\inst{2}~~~
Gal Chechik\inst{3}~~~
Trevor Darrell\inst{2}~~~
Amir Globerson\inst{1}~~~
}

\authorrunning{Herzig et al.}
% First names are abbreviated in the running head.
% If there are more than two authors, 'et al.' is used.

\institute{$^1$Tel Aviv University~~~ $^2$UC Berkeley~~~ $^3$Bar-Ilan University, NVIDIA Research}

% \end{comment}

%******************
\maketitle

\renewcommand*{\thefootnote}{$\star$}
\setcounter{footnote}{1}
\footnotetext{Equal Contribution.}
\renewcommand*{\thefootnote}{$\dagger$}
\setcounter{footnote}{1}
\footnotetext{Work done while at the University of Berkeley California.}
\renewcommand*{\thefootnote}{\arabic{footnote}}
\setcounter{footnote}{0}

%%%%%%%%%%%%%%% Main Paper %%%%%%%%%%%%%%%
%%%%%%%%%%%%%%% Main Paper %%%%%%%%%%%%%%%
%\vspace{-25pt}
\begin{figure}[ht]
    \centering
    \includegraphics[width=\linewidth]{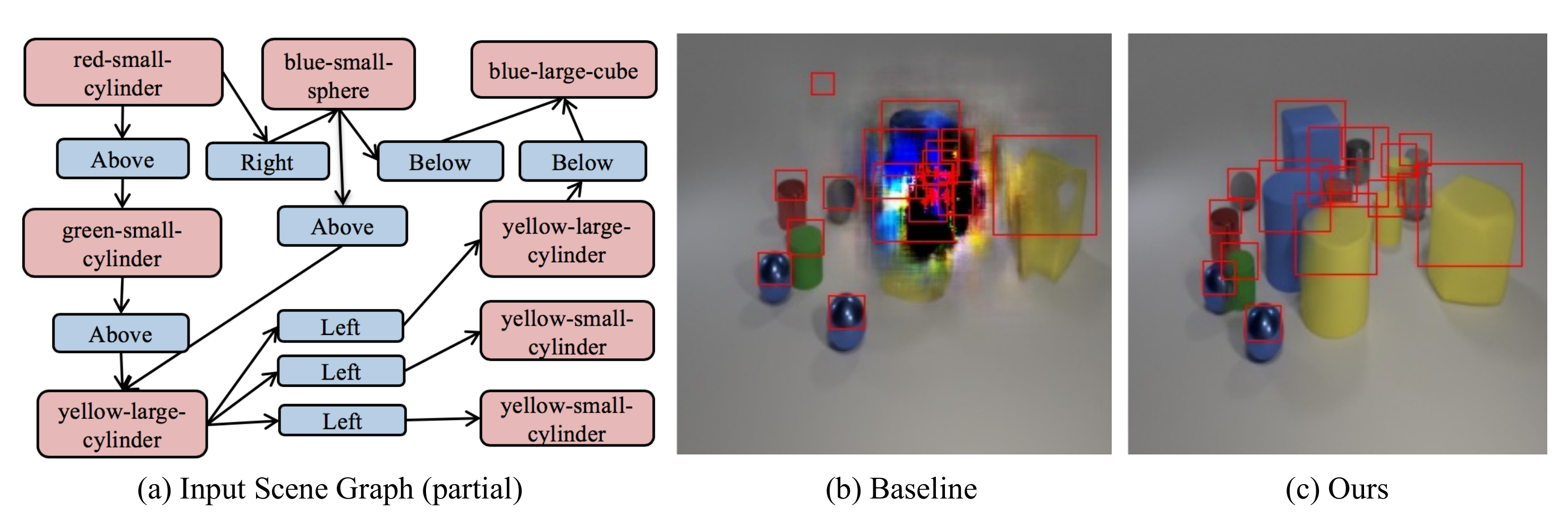}
%\vspace{-10pt}
    \captionof{figure}{{\bf Generation of scenes with many objects.} Our method achieves better performance on such scenes than previous methods. {\bf Left:} A partial input scene graph. {\bf Middle:} Generation using \cite{johnson2018image}. {\bf Right:} Generation using our proposed method.
    }
    \label{fig:teaser}
\end{figure}

\begin{abstract}
    %\vspace{-30pt}
    Generating realistic images of complex visual scenes becomes challenging when one wishes to control the structure of the generated images. Previous approaches showed that scenes with few entities can be controlled using scene graphs, but this approach struggles as the complexity of the graph (the number of objects and edges) increases. In this work, we show that one limitation of current methods is their inability to capture semantic equivalence in graphs. We present a novel model that addresses these issues by learning canonical graph representations from the data, resulting in improved image generation for complex visual scenes.\footnote{The project page is available at https://roeiherz.github.io/CanonicalSg2Im/.} Our model demonstrates improved empirical performance on large scene graphs, robustness to noise in the input scene graph, and generalization on semantically equivalent graphs. Finally, we show improved performance of the model on three different benchmarks: Visual Genome, COCO, and CLEVR.

\keywords{Scene graphs, canonical representations, image generation}
\end{abstract}

%%%%%%%%%%%%%%%%%%%%%%% Intro %%%%%%%%%%%%%%%%%%%%%%% 

%\vspace{-20pt}
\section{Introduction}
\label{sec:intro}
Generating realistic images is a key task in computer vision research. Recently, a series of methods were presented for creating realistic-looking images of objects and faces (e.g. \cite{brock2018large,karras2019style,park2019semantic}). Despite this impressive progress, a key challenge remains: how can one control the content of images at multiple levels to generate images that have specific desired composition and attributes. Controlling content can be particularly challenging when generating visual scenes that contain multiple interacting objects. One natural way of describing such scenes is via the structure of a \textit{Scene Graph} (SG), which contains a set of objects as nodes and their attributes and relations as edges. Indeed, several studies addressed generating images from SGs \cite{ashual2019specifying,johnson2018image,li2019pastegan}. Unfortunately, the quality of images generated from SGs still lags far behind that of generating single objects or faces. Here we show that one problem with current models is their failure to capture logical equivalences, and we propose an approach for overcoming this limitation.

%current models fail to capture certain logical invariances in SGs, and that this failure hinders learning effective generation models. We further demonstrate that taking these SG invariances into account improves generation quality. 

SG-to-image typically involves two steps: first, generating a layout from the SG, and then generating pixels from the layout. In the first step, the SG does not contain bounding boxes, and is used to generate a layout that contains bounding box coordinates for all objects. The transformation relies on geometric properties specified in the SG such as ``$(A,\mbox{right},B)$''. Since SGs are typically generated by humans, they usually do not contain {\em{all} }correct relations in the data. For example, in an SG with relation $(A,\mbox{right}, B)$ it is always true that $(B,\mbox{left},A)$, yet typically only one of these relations will appear.\footnote{We note that human raters don't typically include all logically equivalent relations. We analyzed data and found only small fraction of these are annotated in practice.} This example illustrates that multiple SGs can describe the same physical configuration, and are thus logically equivalent. Ideally, we would like all such SGs to result in the same layout and image. As we show here, this often does not hold for existing models, resulting in low-quality generated images for large graphs (see \figref{fig:teaser}).

Here we present an approach to overcome the above difficulty. We first formalize the problem as being invariant to certain logical equivalences (i.e., all equivalent SGs should generate the same image). Next, we propose to replace any SG with a ``canonical SG'' such that all logically equivalent SGs are replaced by the same canonical SG, and this canonical SG is the one used in the layout generation step. This approach, by definition, results in the same output for all logically equivalent graphs. We present a practical approach to learning such a canonicalization process that does not use any prior knowledge about the relations (e.g., it does not know that ``right'' is a transitive relation). We show how to integrate the resulting canonical SGs within a SG-to-image generation model, and how to learn it from data. Our method also learns more compact models than previous methods, because the canonicalization process distributes information across the graph with only few additional parameters.

In summary, our novel contributions are as follows: 1) We propose a model that uses canonical representations of SGs, thus obtaining stronger invariance properties. This in turn leads to generalization on semantically equivalent graphs and improved robustness to graph size and noise in comparison to existing methods. 2) We show how to learn the canonicalization process from data. 3) We use our canonical representations within an SG-to-image model and show that our approach results in improved generation on Visual Genome, COCO, and CLEVR, compared to the state-of-the-art baselines.

%%%%%%%%%%%%%%%%%%%%%%% Related Work %%%%%%%%%%%%%%%%%%%%%%% 
% \vspace{-20pt}
\section{Related Work}
\label{sec:related}
% \vspace{-10pt}
\textbf{Image generation}. Earlier work on image generation used autoregressive networks~\cite{van2016conditional,oord2016pixel} to model pixel conditional distributions. Recently, GANs \cite{goodfellow2014generative} and VAEs \cite{kingma2013auto} emerged as models of choice for this task. Specifically, generation techniques based on GANs were proposed for generating sharper, more diverse and better realistic images in a series of works~\cite{che2016mode,karras2019style,lim2017geometric,mao2017least,miyato2018spectral,radford2015unsupervised,salimans2016improved,wang2018pix2pixHD,zhang2018self,zhao2016energy}. %In this work, we address conditional image generation that matches scene graphs as a specific input for image creation.
\newline
\textbf{Conditional image synthesis}. Multiple works have explored approaches for generating images with a given desired content. Conditioning inputs may include class labels~\cite{chen2016infogan,mirza2014conditional,odena2017conditional}, source images~\cite{huang2018multimodal,pix2pix2017,liu2017unsupervised,taigman2016unsupervised,zhu2017unpaired,zhu2017toward}, model interventions~\cite{bau2019gandissect}, and text~\cite{hong2018inferring,qiao2019mirrorgan,reed2016generative,reed2016learning,sharma2018chatpainter,xu2018attngan,yin2019semantics,zhang2017stackgan}. Other studies~\cite{dong2017adversarial,nam2018mani} focused on image manipulation using language descriptions while disentangling the semantics of both input images and text descriptions.
\newline
\textbf{Structured representation}. Recent models~\cite{hong2018inferring,zhou2019text} incorporate intermediate structured representations, such as layouts or skeletons, to control the coarse structure of generated images. 
%A two-stage pipeline that generates such layout as an intermediate step has been shown effective.
Several studies focused on  generating images from such representations (e.g., semantic segmentation masks~\cite{chen2017photographic,pix2pix2017,park2019semantic,wang2018pix2pixHD}, layout~\cite{zhao2019image}, and SGs~\cite{ashual2019specifying,johnson2018image,li2019pastegan}). Layout and SGs are more compact representations as compared to segmentation masks. While layout~\cite{zhao2019image} provides spatial information, SGs~\cite{johnson2018image} provide richer information about attributes and relations. Another advantage of SGs is that they are closely related to the semantics of the image as perceived by humans, and therefore editing an SG corresponds to clear changes in semantics. SGs and visual relations have also been used in image retrieval~\cite{johnson2015image,schuster2015generating}, relationship modeling~\cite{referential_relationships,raboh2020dsg,Schroeder2019ICCV}, image captioning~\cite{xu2019scene} and action recognition~\cite{herzig2019stag,CVPR2020_SomethingElse}. Several works have addressed the problem of generating SGs from text~\cite{schuster2015generating,tan2019text2scene}, standalone objects~\cite{xiaotian2019tell} and images \cite{herzig2018mapping}.
\newline
\textbf{Scene-graph-to-image generation}. Sg2Im~\cite{johnson2018image} was the first to propose an end-to-end method for generating images from scene graphs. However, as we note above, the current SG-to-image models ~\cite{ashual2019specifying,deng2018probabilistic,li2019pastegan,mittal2019interactive,Subarna2019ICLRW} show degraded performance on complex SGs with many objects. To mitigate this, the authors in \cite{ashual2019specifying} have utilized stronger supervision in the form of a coarse grid, where attributes of location and size are specified for each object. The focus of our work is to alleviate this difficulty by directly modeling some of the invariances in SG representation. Finally, the topic of invariance in deep architectures has also attracted considerable interest, but mostly in the context of certain permutation invariances \cite{herzig2018mapping,deep_sets}. Our approach focuses on a more complex notion of invariance, and addresses it via canonicalization. 

%%%%%%%%%%%%%%%%%%%%%%% Method %%%%%%%%%%%%%%%%%%%%%%% 
\begin{figure*}[t]
    \centering
    \includegraphics[width=\linewidth]{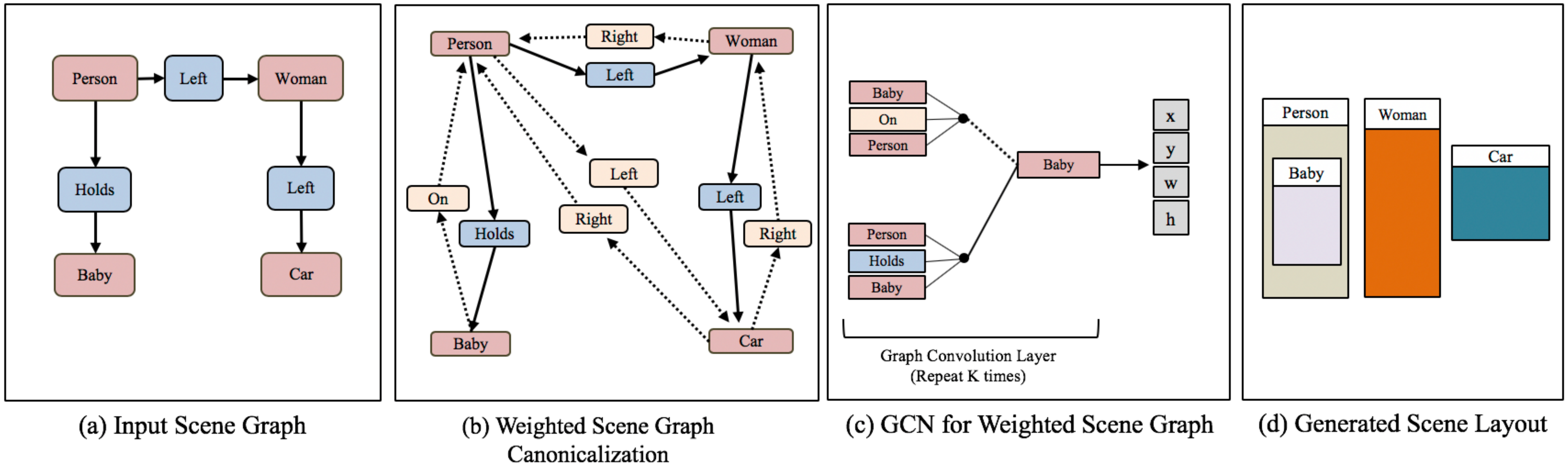}
    % \vspace{-40pt}
    % \gal{panel (b) does not have all reverse edges. Is this on purpose?  }
    \captionof{figure}{{\bf Proposed Scene Graph to Layout architecture. }
    (a) An input scene graph. (b) The graph is first canonicalized using our WSGC method in \secref{sec:sg_to_weighted_sg}. Dashed edges correspond to completed relations that are assigned with weights. (c) A GCN is applied to the weighted graph, resulting in bounding box coordinates. (d) The GCN outputs are used to generate the predicted layout.}
    %\vspace{-15pt}
    \label{fig:layout_arch}
\end{figure*}

\section{Scene Graph Canonicalization}
\label{sec:approach}
%A scene graph is a formalism for describing a set of objects along with their relations and attributes. 
As mentioned above, the same image can be represented by multiple logically-equivalent SGs. Next we define this formally and propose an approach to canonicalize graphs that enforces invariance to these equivalences. In \secref{sec:sg_to_image} we show how to use this canonical scene graph within an SG-to-image task.

Let $\cC$ be the set of objects categories and $\cR$ be the set of possible relations.\footnote{Objects in SGs also contain attributes but we drop these for notational simplicity.}
An SG over $n$ objects is a tuple $(O,E)$ where $O\in\cC^n$ is the object categories and $E$ is a set of labeled directed edges (triplets) of the form $(i,r,j)$ where $i,j\in\{1,\ldots,n\}$ and $r\in\cR$. Thus an edge $(i,r,j)$ implies that the $i^{th}$ object (that has category $o_i$) should have relation $r$ with the $j^{th}$ object. Alternatively the set $E$ can be viewed as a set of $|\cR|$ directed graphs where for each $r$ the graph $E_r$ contains only the edges for relation $r$.

Our key observation is that relations in SGs are often dependent, because they reflect  properties of the physical world. This means that for a relation $r$, the presence of certain edges in $E_r$ implies that other edges have to hold. For example, assume $r$ is a \textbf{transitive relation} like ``left''. Then if $i,j\in E_r$ and $j,k \in E_r$, it should hold that $i,k\in E_r$. There are also dependencies between different relations. For example, if $r,r'$ are {\bf converse relations} (e.g., $r$ is ``left'' and $r'$ ``right'') then $i,j\in E_r$ implies $j,i\in E_{r'}$. Formally, all the above dependencies are first order logic formulas. For example, $r,r'$ being converse corresponds to the formula $\forall i,j: r(i,j)\implies r'(j,i)$. Let $\cF$ denote this set of formulas.

The fact that certain relations are implied by a graph does not mean that they are contained in its set of relations. For example, $E$ may contain $(1,{\mbox{left}}, 2)$ but not $(2,{\mbox{right}}, 1)$.\footnote{This is because empirical graphs $E$ are created by human annotators, who typically skip redundant edges that can be inferred from other edges.} However, we would like SGs that contain either or both of these relations to result in the same image. In other words, we would like all logically equivalent graphs to result in the same image, as formally stated next.

Given a scene graph $E$ denote by $Q(E)$ the set of graphs that are logically equivalent to $E$.\footnote{Equivalence of course depends on what relations are considered, but we do not specify this directly to avoid notational clutter.}  As mentioned above, we would like all these graphs to result in the same image. Currently, SG-to-layout architectures do not have this invariance property because they operate on $E$ and thus sensitive to whether it has certain edges or not. A natural approach to solve this is to replace $E$ with a {\em canonical form} $C(E)$ such that $\forall E'\in Q(E)$ we have $C(E')=C(E)$. There are several ways of defining $C(E)$. Perhaps the most natural one is the ``relation-closure'' which is the graph containing all relations implied by those in $E$. 
\begin{definition}
Given a set of formulas $\cF$, and relations $E$, the closure $C(E)$ is the set of relations that are true in any SG that contains $E$ and satisfies $\cF$.
\end{definition}
We note that the above definition coincides with the standard definition for closure of relations. Our definition emphasizes the fact that $C(E)$ are relations that are necessarily true given those in $E$. Additionally we allow for multiple relations, whereas closure is typically defined with respect to a single property.
Next we describe how to calculate $C(E)$ when $\cF$ is known, and then explain how to learn $\cF$ from data.

%%%%%%%%%%%%%%%%%%%%%%%%%%%%% SGC, WSGC-E and WSGC-S %%%%%%%%%%%%%%%%%%%%%%%%%%%%% 

% \newcommand{\hardmethod}{\text{SGC}}
% \newcommand{\softslowmethod}{\text{WSGC-E}}
% \newcommand{\softfastmethod}{\text{WSGC-S}}

\subsection{Calculating Scene Graph Canonicalization \label{sec:weighted_sgc}}
For a general set of formulas, calculating the closure is hard as it is an instance of inference in first order logic. However, here we restrict ourselves to the following formulas for which this calculation is efficient:\footnote{We note that we could have added an option for symmetric relations, but we do not include these, as they not exhibited in the datasets we consider.}
% . In what follows we restrict our attention to the following two formulas
\begin{itemize}
    \item Transitive Relations: We assume a set of relations $\cR_{trans}\subset \cR$ where all $r\in\cR_{trans}$ satisfy the formula $\forall x,y,z: r(x,y) \wedge r(y,z)\implies r(x,z)$.
    % \item Symmetry: We assume a set of relations $\cR_{sym}\subset \cR$ where all $r\in\cR_{sym}$ have a corresponding formula $\forall x,y: r(x,y) \implies r(y,x)$.
    \item Converse Relations: We assume a set of relations pairs $\cR_{conv}\subset \cR\times \cR$ where all $(r,r')\in\cR_{conv}$ satisfy the formula $\forall x,y: r(x,y) \implies r'(y,x)$.
\end{itemize}

Under the above set of formulas, the closure $C(E)$ can be computed via the following procedure, which we call {\bf Scene Graph Canonicalization (\hardmethod)}:

{\noindent\bf Initialization:} Set $C(E)=E$. 
\newline
{\bf Converse Completion:}  $\forall (r,r')\in \cR_{conv}$, if $(i,r,j)\in E$, add $(j,r',i)$ to $C(E)$. 
%Call this the {\em Converse Closure}.
\newline
{\bf Transitive Completion:} For each $r\in \cR_{trans}$ calculate
the transitive closure of $C_r(E)$ (namely the $r$ relations in $C(E)$) and add it to $C(E)$. The transitive closure can be calculated using the Floyd-Warshall algorithm \cite{floyd1962algorithm}.

It can be shown (see Supplementary) that the SGC procedure indeed produces the closure of $C(E)$. 
% See \secref{sec:supp:proof} in the Supplementary for a proof rrr.

% \begin{lemma} 
% The procedure outputs the closure $C(E)$.
% \end{lemma}
% \begin{proof}
% See \secref{sec:supp:proof} in the Supplementary file.
% \end{proof}

\subsection{Calculating Weighted Scene Graph Canonicalization \label{sec:sg_to_weighted_sg}}
Thus far we assumed that the sets $R_{trans}$ and $R_{conv}$ were given. Generally, we don't expect this to be the case. We next explain how to construct a model that doesn't have access to these. In this formulation we will add edges with weights, to reflect our level of certainty in adding them. These weights will depend on parameters, which will be learned from data in an end-to-end manner (see \secref{sec:losses}). See \figref{fig:layout_arch} for a high level description of the architecture.
%which employs the completion process is shown in \figref{fig:layout_arch}.

Since we don't know which relations are transitive or converses, we assign probabilities to reflect this uncertainty. In the transitive case, for each $r\in \cR$ we use a parameter $\theta_r^{trans}\in \reals^{|\cR|}$ to define the probability that $r$ is transitive:
\begin{equation}
    p^{trans}(r) = \sigma(\theta^{trans}_r)
\label{eq:p_trans}
\end{equation}
where $\sigma$ is the sigmoid function. For converse relations, we let $p^{conv}(r'|r)$ denote the probability that $r'$ is the converse of $r$. We add another \textit{empty} relation $r'=\phi$ such that $p^{conv}(\phi|r)$ is the probability that $r$ has no converse in $\cR$. This is parameterized via $\theta_{r,r'}^{conv}\in \reals^{|\cR|\times |\cR\cup\phi|}$ which is used to define the distribution:
%\footnote{We set $\theta_{r,\phi}^{conv}=0$ for simplicity.} 
\begin{equation}
p^{conv}(r'|r) = \frac{e^{\theta^{conv}_{r, r'}}}{\sum_{\hat{r}\in{\cR\cup \phi}}{e^{\theta^{conv}_{r, \hat{r}}}}}
\label{eq:p_conv}
\end{equation}
Finally, since converse pairs are typically symmetric (e.g., ``left'' is the converse of ``right'' and vise-versa), for every $r, r' \in \cR \times \cR$ we set $\theta^{conv}_{r,r'} = \theta^{conv}_{r',r}$. Our model will use these probabilities to complete edges as explained next. In \secref{sec:weighted_sgc} we described the \hardmethod{} method, which takes a graph $E$ and outputs its completion $C(E)$. The method assumed knowledge of the converse and transitive relations. Here we extend this approach to the case where we have weights on the properties of relations, as per Equations~\ref{eq:p_trans} and \ref{eq:p_conv}. Since we have weights on possible completions we will need to work with a weighted relation graph and thus from now on consider edges $(i,r,j,w)$. Below we describe two methods \textit{WSGC-E} and \textit{WSGC-S} for obtaining weighted graphs. 
%The former is simpler but more computationally demanding, and the latter is tractable and we therefore use it in our implementation.
\secref{sec:sg_to_image} shows how to use these weighted graphs in an SG to image model.
% and in section~\secref{sec:gcn} we show how to use these weighted graphs to predict the scene layout. 

%%%%%%%%%%%%% WSGC-E and WSGC-S %%%%%%%%%%%%% 

\noindent {\bf Exact Weighted Scene Graph Canonicalization (\softslowmethod).}
%We next describe a method for obtaining a weighted relation graph that is the natural extension of \hardmethod. However, this method is computationally demanding (though poly-time) for large SGs, and thus we later present a faster alternative  \textit{WSGC-S}.
We describe briefly a method that is a natural extension of \hardmethod~(further details are provided in the Supplementary).
% This is a natural weighted extension of \hardmethod. 
It begins with the user-specified graph $E$, with weights of one. Next two weighted completion steps are performed, corresponding to the {\hardmethod} steps.
{\bf Converse Completion:} In \hardmethod, this step adds all converse edges. In the weighted case it makes sense to add the converse edge with its corresponding converse weight.
For example, if the graph $E$ contains the edge $(i,\text{above},j,1)$ and $p^{conv}(\text{below}|\text{above}) =0.7$, we add the edge $(j,\text{below},i,0.7)$. 
%See \figref{fig:WSGC2}b.
\noindent {\bf Transitive Completion:} In \hardmethod, all transitive edges are found and added. In the weighted case, a natural alternative is to set a weight of a path to be the product of weights along this path, and set the weight of a completed edge $(i,r,j)$ to be the maximum weight of a path between $i$ and $j$ times the probability $p^{trans}(r)$  that the relation is transitive. This can be done in poly-time, but runtime can be substantial for large graphs. We offer a faster approach next.

% \noindent {\bf Exact Weighted Scene Graph Canonicalization (\softslowmethod).}
% This is a natural weighted extension of \hardmethod.
% It begins with the user-specified graph $E$, with weights of one. Next two weighted completion steps are performed, corresponding to the {\hardmethod} steps.
% {\bf Converse Completion:} In \hardmethod, this step adds all converse edges. In the weighted case it makes sense to add the converse edge with its corresponding converse weight.
% For example, if the graph $E$ contains the edge $(i,\text{above},j,1)$ and $p^{conv}(\text{below}|\text{above}) =0.7$, we add the edge $(j,\text{below},i,0.7)$. 
% \noindent {\bf Transitive Completion:} In \hardmethod, all transitive edges are found and added. In the weighted case, a natural alternative is to set a weight of a path to be the product of weights along this path, and set the weight of a completed edge $(i,r,j)$ to be the maximum weight of a path between $i$ and $j$ times the probability $p^{trans}(r)$  that the relation is transitive.  This can be done in poly-time, but runtime can be substantial for large graphs. We offer a faster approach next.

\noindent {\bf Sampling Based Weighted Scene Graph Canonicalization (\softfastmethod).} 
The difficulty in \softslowmethod{} is that the transitivity step is performed on a dense graph (most weights will be non-zero). To overcome this, we propose to replace the converse completion step of \softslowmethod{} with a sampling based approach that samples completed edges, but always gives them a weight of $1$ when they are added. In this way, the transitive step is computed on a much sparser graph with weights $1$. We next describe the two steps for the \softfastmethod{} procedure.

\noindent {\bf Converse Completion:} Given the original user-provided graph $E$, for each $r$ and edge $(i,r,j,1)$ we sample a random variable $Z\in\cR\cup\phi$ from $p^{conv}(\cdot|r)$ and if $Z\neq\phi$, we add the edge $(j,Z,i,1)$.  For example, see \figref{fig:WSGC3}b. After sampling such $Z$ for all edges, a new graph $E'$ is obtained, where all the weights are $1$.\footnote{We could sample multiple times and average, but this is not necessary in practice.}

\noindent {\bf Transitive Completion:} For the graph $E'$ and for each relation $r$, calculate the transitive closure of $C(E_r')$ and add all new edges in this closure to $E'$ with weight $p^{trans}(r)$. See illustration in \figref{fig:WSGC3}c. Note that this can be calculated in polynomial time using the FW algorithm \cite{floyd1962algorithm}, as in the \hardmethod{} case.

Finally, we note that if all assigned weights are discrete, both the \softslowmethod{} and \softfastmethod{} are identical to \hardmethod.

\begin{figure}[t!]  
\centering  
\ignorebig{
\begin{subfigure}
  \centering
    \includegraphics[width=\linewidth]{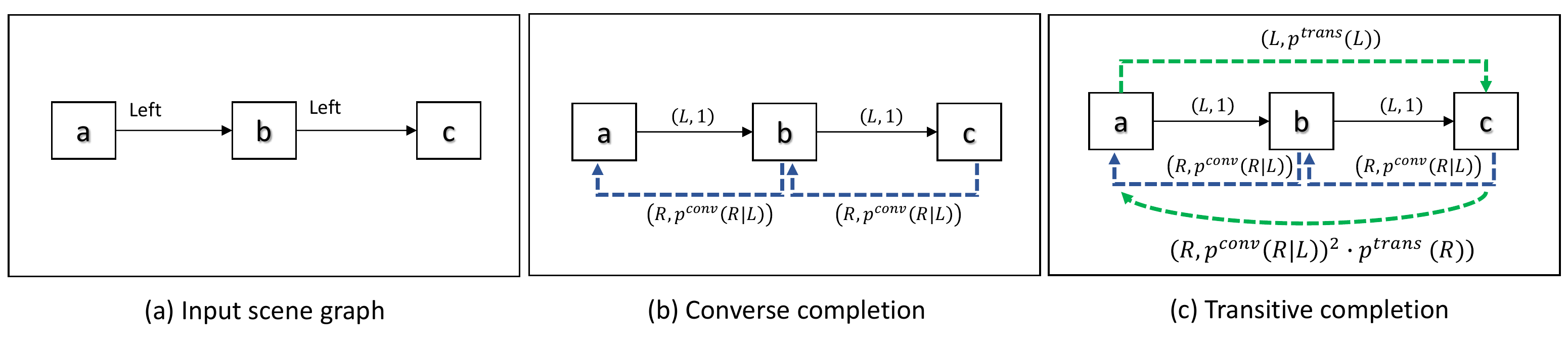}
    % \vspace{-20pt}
    \caption{\small{An illustration of WSGC-E where relations are Left (L) and Right (R). (a) The input graph contains two relations with weight $1$. (b) Converse edges (blue dashed arrows) are completed with the weights $p^{conv}$. (c) Transitive edges (green dashed arrows) are added and assigned the weight of the corresponding path times $p^{trans}$.}}
    \label{fig:WSGC2}
\end{subfigure}  
}
\begin{subfigure}
  \centering
    %\vspace{-15pt}
    \includegraphics[width=\linewidth]{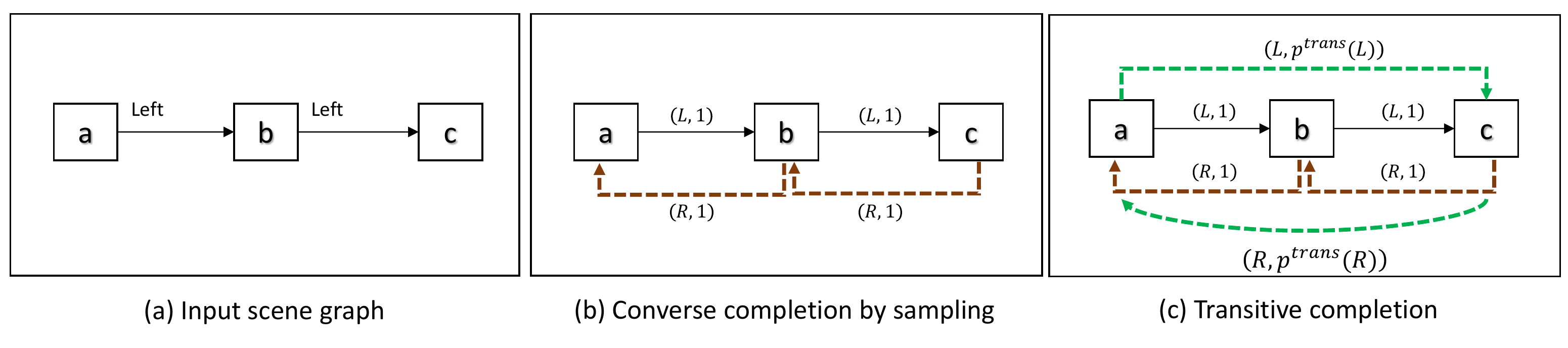}
    %\vspace{-20pt}
    \caption{\small{Illustration of WSGC-S. (a) The input graph. (b) Converse edges (brown arrows) are sampled from $p^{conv}$ and assigned a weight $1$ (here two edges were sampled). (c) Transitive edges (green arrows) are completed and assigned a weight $p^{trans}$.}}
    \label{fig:WSGC3}
    %\vspace{-15pt}
\end{subfigure}  

\end{figure}

% \vspace{-5pt}
\section{Scene Graph to Image using Canonicalization} 
% \vspace{-5pt}
\label{sec:sg_to_image}

Thus far we showed how to take the original graph $E$ and complete it into a weighted graph $E'$, using the {\softfastmethod} procedure. Next, we show how to use $E'$ to generate an image, by first mapping $E'$ to a scene layout (see \figref{fig:layout_arch}), and then mapping the layout to an image (see AttSPADE Figure in the Supplementary). The following two components are variants of previous SG to image models \cite{ashual2019specifying,johnson2018image,Sun_2019_ICCV}, and thus we describe them briefly (see Supplementary for details).
% (see \figref{fig:attspade} in the Supplementary) rrr.

{\bf From Weighted SG to Layout}:
A layout is a set of bounding boxes for the nodes in the SG. A natural architecture for such graph-labeling problems is a Graph Convolutional Network (GCN) ~\cite{kipf2016semi}. Indeed, GCNs have recently been used for the SG to layout task \cite{ashual2019specifying,johnson2018image,li2019pastegan}. We also employ this approach here, but modify it to our weighted scene graph. Namely, we modify the graph convolution layer such that the aggregation step of each node is set to be a weighted average where the weights are those in the canonical SG.
%For more details see the Supplementary.
% See Section~\ref{sec:supp:sg-to-layout:gcn} in the Supplementary for implementation details rrr.

%%%%% Moved the GCN part to the supp %%%%%

{\bf From Layout to Image:}
We now need to transform the obtained layout in \secref{sec:sg_to_image} to an actual image.
% In \secref{sec:sg_to_layout} we described our model for generating a layout from a SG using the notion of soft relation closure. If our goal is to generate an image from the layout, an additional layout-to-image model is required.
Several works have proposed models for this step \cite{sun2019image,zhaobo2019layout2im}, where the input was a set of bounding boxes and their object categories.
%layout is typically a semantic layout, e.g where every object belongs to a a single semantic class class. 
We follow this approach, but extend it so that attributes for each object (e.g., color, shape and material, as in the CLEVR dataset) can be specified. We achieve this via a novel generative model, AttSPADE, that supports attributes. More details are in Supplementary. \figref{fig:attr_generation} shows an example of the model trained on CLEVR and applied to several SGs. Finally, our experiments on non CLEVR datasets simply we use a pre-trained LostGAN~\cite{Sun_2019_ICCV} model.

% More details are in Supplementary \secref{sec:supp:attspade} rrr.

%\vspace{-5pt}
\section{Losses and Training \label{sec:losses}} 
%\vspace{-5pt}
Thus far we described a model that starts with an SG and outputs an image, using the following three steps: SG to canonical weighted SG (\secref{sec:sg_to_weighted_sg}), weighted SG to layout (\secref{sec:sg_to_image}) and finally layout to image (\secref{sec:sg_to_image}). In this section we describe how the parameters of these steps are trained in an end-to-end manner.
 % and \secref{sec:supp:attspade} in the Supplementary)
% \amirg{At this stage, provide a very clear explanation of what the different losses are and which parameters are involved. Try to be as clear as possible about which loss depends on which parameters while keeping notation minimal.}
We focus on training with the {\softfastmethod}, since this is what we use in most of our experiments. See Supplementary~for Training with {\softslowmethod}.
% Training with {\softslowmethod} is explained in the Supplementary in Section~\ref{sec:supp:sg-to-layout} rrr.

Below we describe the loss for a single input scene graph $E$ and its ground truth layout $Y$. The parameters of the model are as follows: $\theta^{g}$ are the parameters of the GCN in \secref{sec:sg_to_image}, $\theta^{trans}$ are the parameters of the transitive probability (\eqref{eq:p_trans}), and $\theta^{conv}$ are those of the converse probability (\eqref{eq:p_conv}). Let $\theta$ denote the set of all parameters. Recall that in the first step \secref{sec:sg_to_weighted_sg}, we sample a set of random variables $\bar{Z}$ and use these to obtain a weighted graph $\text{WSGC}_{\bar{Z}}(E;\theta^{trans})$. Denote the GCN applied to this graph by $G_{\theta^g}(\text{WSGC}_{\bar{Z}}(E;\theta^{trans}))$.

We use the $L_1$ loss between the predicted and ground truth bounding boxes $Y$. Namely, we wish to minimize the following objective:
\begin{equation}
L(\theta) = \E_{\bar{Z} \backsim q(\theta^{conv})} \normm{Y - G_{\theta^{g}}(\text{WSGC}_{\substack{\bar{Z}}}(E;\theta^{trans}))}_1
\label{eq:opt_loss}
\end{equation}
where $\bar{Z} = \{Z_e | e \in E\}$ is a set of independent random variables each sampled from $p^{conv}(r'|r(e);\theta^{conv})$ (see \eqref{eq:p_conv} and the description of {\softslowmethod}), and $q(\theta^{conv})$ denotes this sampling distribution.

The gradient of this loss with respect to all parameters except $\theta^{conv}$ can be easily calculated. Next, we focus on the gradient with respect to $\theta^{conv}$. Because the sampling distribution depends on $\theta^{conv}$ it is natural to use the REINFORCE algorithm \cite{Williams92REINFORCE} in this case, as explained next. Define: 

$R(\bar{Z}; \theta^g,\theta^{trans}) = \normm{Y - G_{\theta^{g}}(\text{WSGC}_{\substack{\bar{Z}}}(E;\theta^{trans}))}_1$.
Then \eqref{eq:opt_loss} is:
$L(\theta^{conv}) = \E_{\substack{\bar{Z} \backsim     q(\theta^{conv})}}{R(\bar{Z}; \theta^g,\theta^{trans})}$.

The key idea in REINFORCE is the observation that:
$$\nabla_{\theta^{conv}}L(\theta) = \E_{\substack{\bar{Z} \backsim q({\theta}^{conv})}}{\nabla_{\theta^{conv}}{R(\bar{Z}; \theta^g,\theta^{trans})}\log{p^{conv}_{\theta}(\bar{Z})}}$$
Thus, we can approximate $\nabla_{\theta^{conv}}L(\theta)$ by sampling $\bar{Z}$ and averaging the above.\footnote{We sample just one instantiation of $\bar{Z}$ per image, since this works well in practice. }

For the layout-to-image component, most of our experiments use a pre-trained LostGAN model. For CLEVR (\figref{fig:attr_generation}) we train our AttSPADE model which is a variant of SPADE~\cite{park2019semantic} and trained similarly (see Supplementary).

% See \secref{sec:supp:attspade} of Supplementary for more details rrr.

%%%%%%%%%%%%%%%%%%% Attribute Generation Fig %%%%%%%%%%%%%%%%%%% 
\begin{figure*}[t!]
    \centering
    \includegraphics[width=\linewidth]{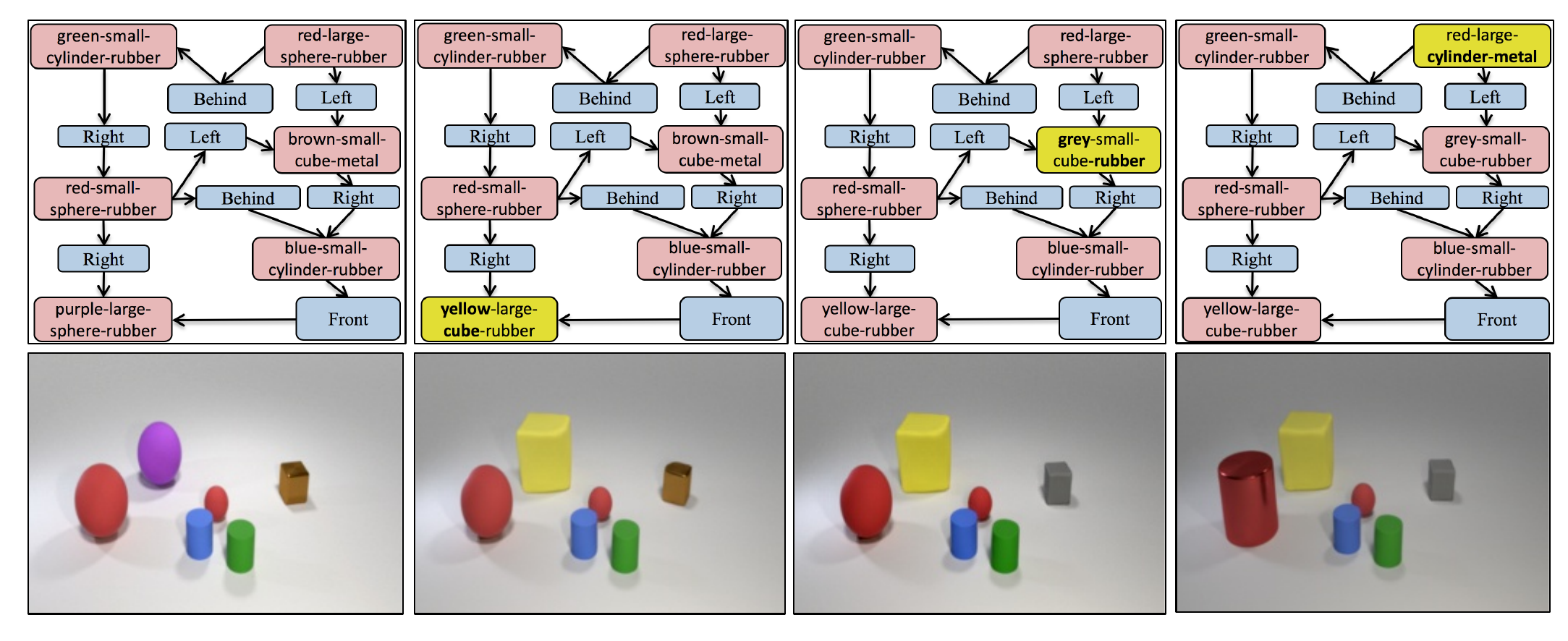}
    \caption{{Demonstration of the AttSPADE generator for scene graphs with varying attributes. Top row shows SGs where each column modifies one attribute. Bottom row is the images generated by AttSPADE.}}
    %\vspace{-15pt}
    \label{fig:attr_generation}
\end{figure*}
%%%%%%%%%%%%%%%%%%% %%%%%%%%%%%%%%%%%%% %%%%%%%%%%%%%%%%%%% 

%%%%%%%%%%%%%%%%%%%%%%% Experiments %%%%%%%%%%%%%%%%%%%%%%% 
%\vspace{-10pt}
\section{Experiments}
%\vspace{-5pt}
\label{sec:exp}
To evaluate our proposed WSGC method, we test performance on two tasks. First, we evaluate on the SG-to-layout task (the task that WSGC is designed for. See \secref{sec:sg_to_weighted_sg}). We then further use these layouts to generate images and demonstrate that improved layouts also yield improved generated images. 

% \roei{We note our WSGC model is identical to the Sg2Im baseline in all aspects that are not related to canonicalization. This provides a well-controlled ablation showing that canonicalization is indeed the reason why SGC improves performance.}\amirg{It is still not clear what the SG2Im baseline is. Because SG2im is a method/paper that does both SG to layout and layout to image. e.g., Oron's paper is also Sg2Im in that respect, but that's confusing. You can say somethign like: "our focus is on the layout generation component, and thus we compare methods that use the same layout to image component (e.g., the one from SG2Im, to directly evaluate just the layout component. But I think this is too complicated a message to write here before you introduced anything. It is better to write it in the conclusion.}

{\bf Datasets.} %\label{sec:exp:datasets}
We consider the following three datasets: COCO-stuff \cite{Caesar2018COCOStuffTA}, Visual Genome (VG) \cite{krishnavisualgenome} and CLEVR~\cite{johnson2017clevr}. We also created a synthetic dataset to quantify the performance of WSGC in a controlled setting. %Details about the datasets are provided below.
\newline
{\bf Synthetic dataset.}
\label{sec:toy_exp}
To test the contribution of learned transitivity to layout prediction, we generate a synthetic dataset. In this dataset, every object is a square with one of two possible sizes. The set of relations includes: $Above$ (transitive), $\mbox{Opposite Horizontally}$ and $X Near$ (non-transitive). To generate training and evaluation data, we uniformly sample coordinates of object centers and object sizes and automatically compute relations among object pairs based on their spatial locations. See Supplementary file for further visual examples. 
%\footnote{We define $(A, Above, B)$ if the center of $A$ is above that of $B$. $(A,\mbox{Opposite Horizontally}, B)$ if $A$ and are on opposite sides of the middle vertical line, and  $(A, X-Near, B)$ if $A$ and $B$ are within distance $10\%$ of the image.}

\noindent{\bf COCO-Stuff 2017~\cite{Caesar2018COCOStuffTA}.}
Contains pixel-level annotations with 40K train and 5K validation images with bounding boxes and segmentation masks for 80 thing categories, and 91 stuff categories. We use the standard subset proposed in previous works ~\cite{johnson2018image}, which contains $\sim$25K training, 1024 validation, and 2048 in test. We use an additional subset we call Packed COCO, containing images with at least 16 objects, resulting in $4,341$ train images, 238 validation, and 238 test.

\noindent{\bf Visual Genome (VG)~\cite{krishnavisualgenome}.} Contains $108,077$ images with SGs. We use the standard subset \cite{johnson2018image}: 62K training, 5506 validation and 5088 test images. We use an additional subset we call Packed VG, containing images with at least 16 objects, resulting in 6341 train images, 809 validation, and 809 test images.

\noindent{\bf CLEVR~\cite{johnson2017clevr}.} A synthetic dataset based on scene-graphs  with  four  spatial  relations:  $left$,  $right$,  $front$ and $behind$, as well as attributes $shape$, $size$, $material$ and $color$. It has 70k training images and 15k for validation and test. 
%Here, every object also has as a set of attributes: $shape$, $size$, $material$ and $color$.

%See \secref{sec:supp:datasets} in the Supplementary for more details about the Packed datasets.
% More details on the construction of the Packed datasets are in \secref{sec:supp:datasets} in the Supplementary.

%%%%%%%%%%%%%% Figure - CLEVR crashes %%%%%%%%%%%%%%
\begin{figure*}[t!]
    \centering
    %(a) \hspace{140pt} (b) \hspace{140pt}(c) \hspace{140pt}
    % \vspace{-10pt}
    \centering
    \includegraphics[width=\linewidth]{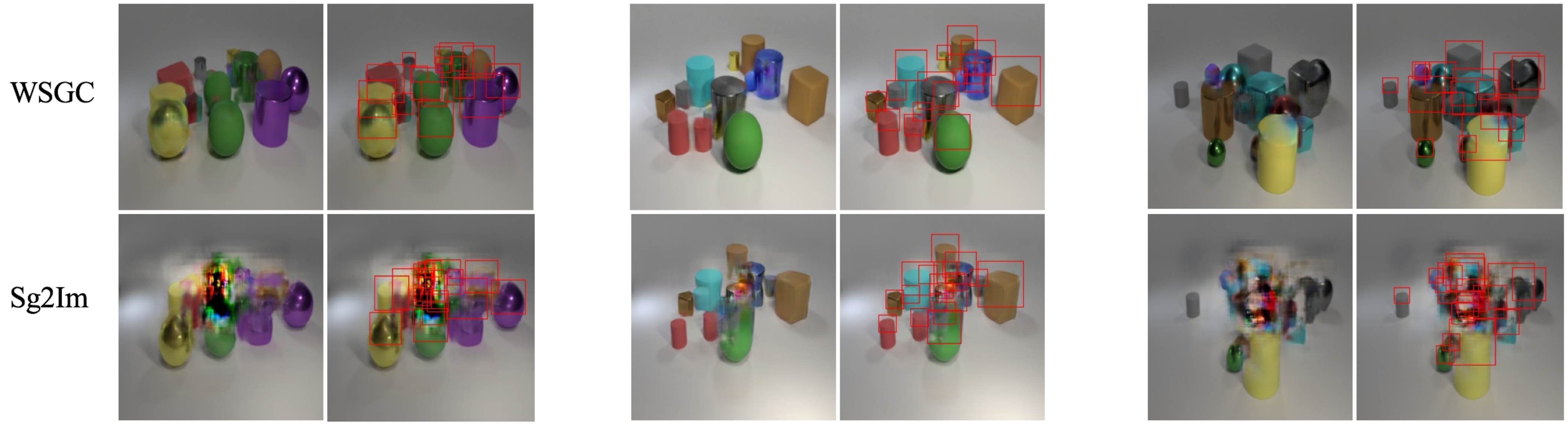}
    \caption{Examples of image generation for CLEVR where the Sg2Im baseline and our WSGC model were trained on images with a maximum of 10 objects but tested on scenes with 16+ objects. Shown are three examples where: Top row: our WSGC generation (with boxes and without). Bottom row:  Sg2Im generation (with boxes and without).}
    % \caption{Examples of image generation from the CLEVR dataset with 16+ objects. Shown are three examples, each with four panels: Top row: our generation (with boxes and without). Bottom row:  baseline generation (with boxes and without).}
    %\vspace{-15pt}
    \label{fig:clevr_ashes}
\end{figure*}
%%%%%%%%%%%%%% %%%%%%%%%%%%%% %%%%%%%%%%%%%% %%%%%%%%%%%%%% 

%{\bf Implementation Details.}

%\amirg{check where we run the E method}

% see \secref{sec:supp:impl_details} in the Supplementary rrr.

%\vspace{-10pt}
\subsection{Scene-Graph-to-layout Generation}
%\vspace{-5pt}

We evaluate the SG-to-layout task using the following metrics: 1) $mIOU$: the mean IOU value. 2)  $R@0.3$ and $R@0.5$: the average recall over predictions with $IOU$ greater than $0.3$ and $0.5$ respectively. We note our WSGC model is identical to the Sg2Im baseline in the SG-to-layout module in all aspects that are not related to canonicalization. This provides a well-controlled ablation showing that canonicalization improves performance.
\newline
{\bf Testing Robustness to Number of Objects.} Scenes can contain a variable number of objects, and SG-to-layout models should work well across these. Here we tested how different models perform as the number of objects is changed in the synthetic dataset. We compared the following models a) A ``Learned Transitivity'' model that uses WSGC to learn the weights of each relation. b) A ``Known Transitivity'' model that is given the transitive relations in the data, and performs hard SGC completion (see \secref{sec:weighted_sgc}). Comparison between ``Learned Transitivity'' and ``Known Transitivity'' is meant to evaluate how well WSGC can learn which relations are transitive. c) A baseline model 
Sg2Im~\cite{johnson2018image} that does not use any relation completion, but otherwise has the same architecture.

%%%%%%%%%%%%%% Synthetic Figure %%%%%%%%%%%%%% 
\begin{figure*}[t!]
    \centering
    \begin{subfigure}[2 GCN Layers]{
    \includegraphics[width=0.3\linewidth]{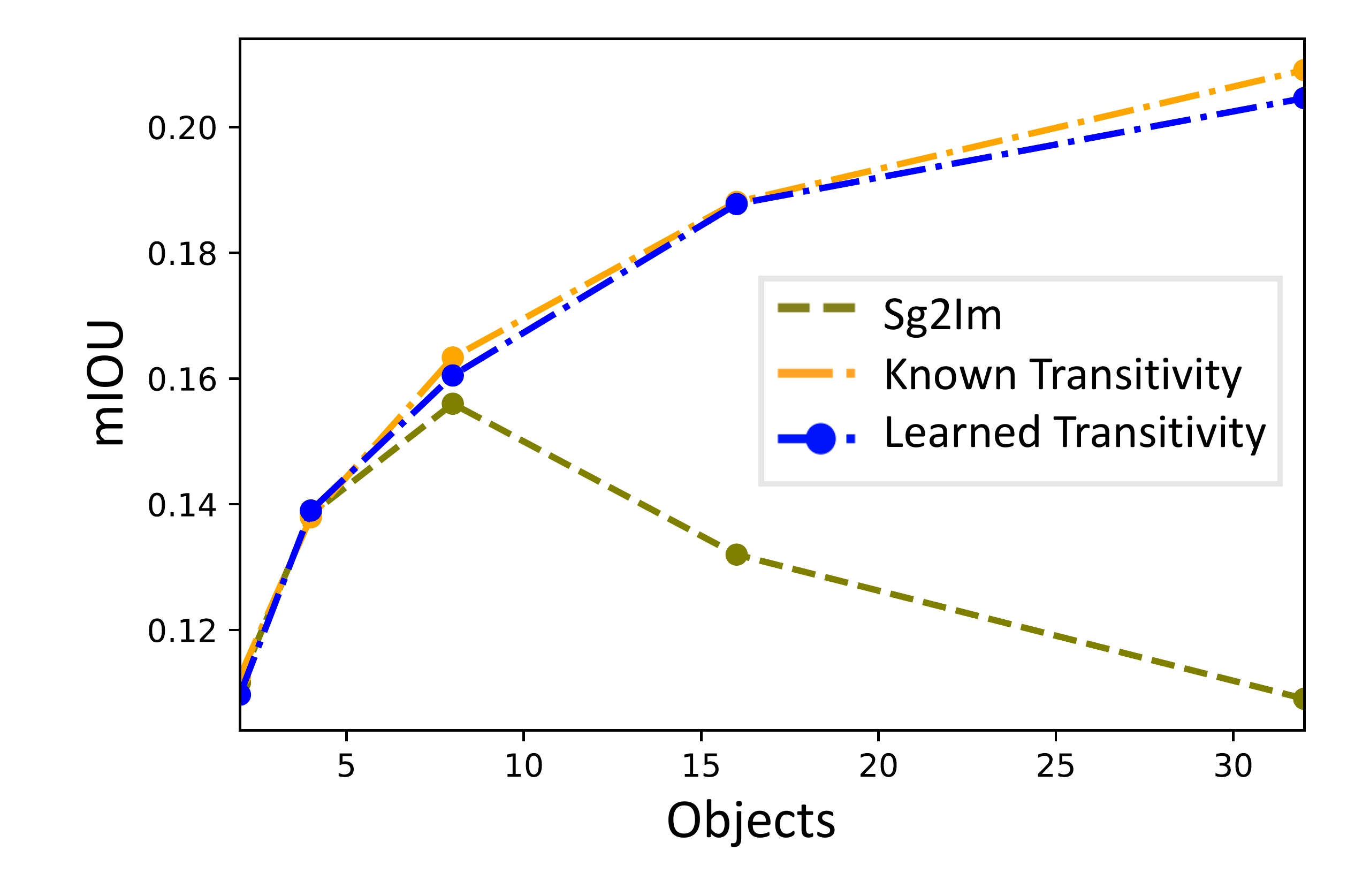}}
    \end{subfigure}\quad
    \begin{subfigure}[4 GCN Layers]{
    \includegraphics[width=0.3\linewidth]{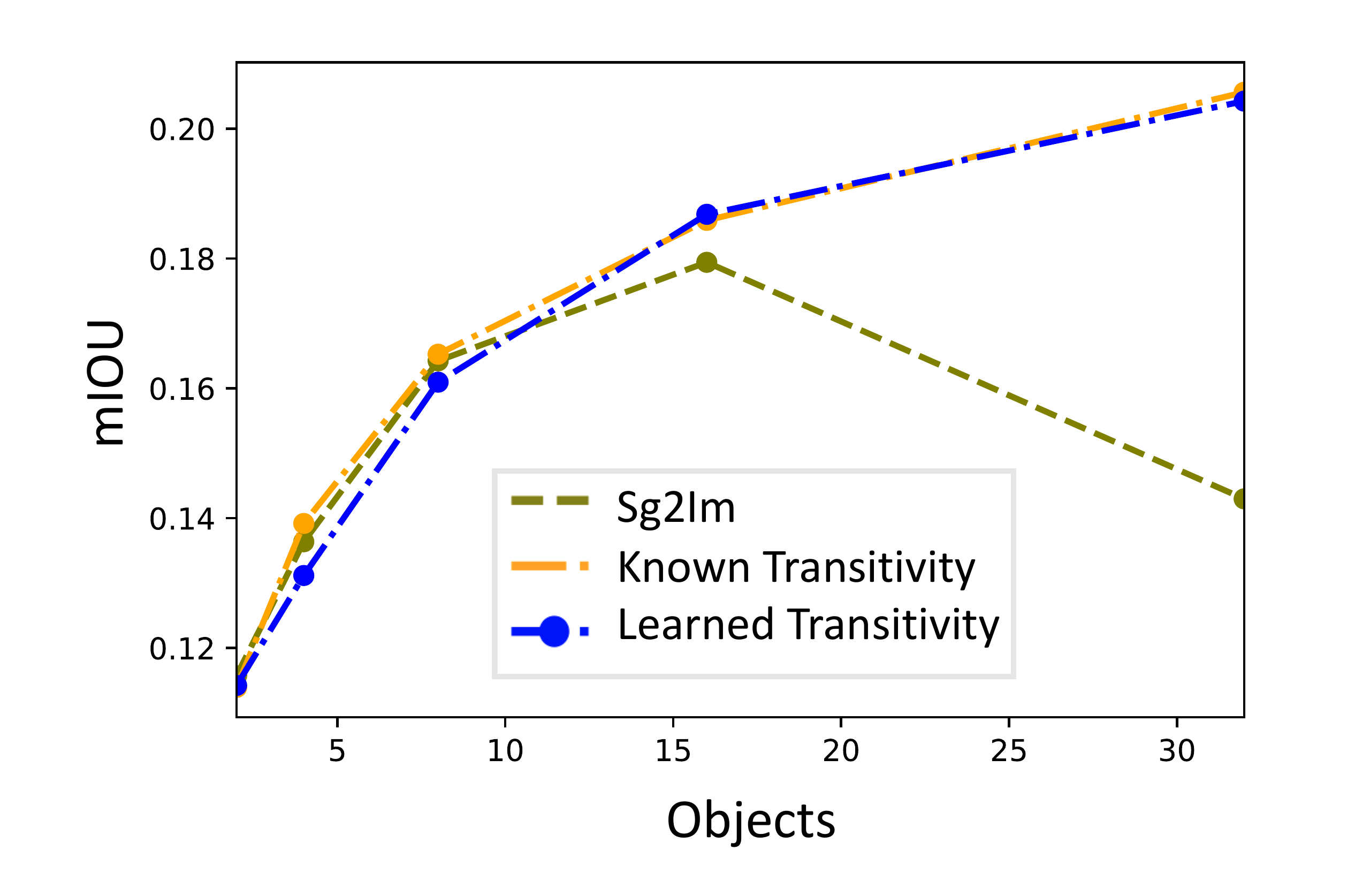}}
    \end{subfigure}\quad
    % \begin{subfigure}[8 GCN Layers]{
    % \includegraphics[width=0.3\linewidth]{images/syn_1_3.pdf}}
    % \end{subfigure}\quad
    \begin{subfigure}[Trained on 16 objects]{
    \includegraphics[width=0.3\linewidth]{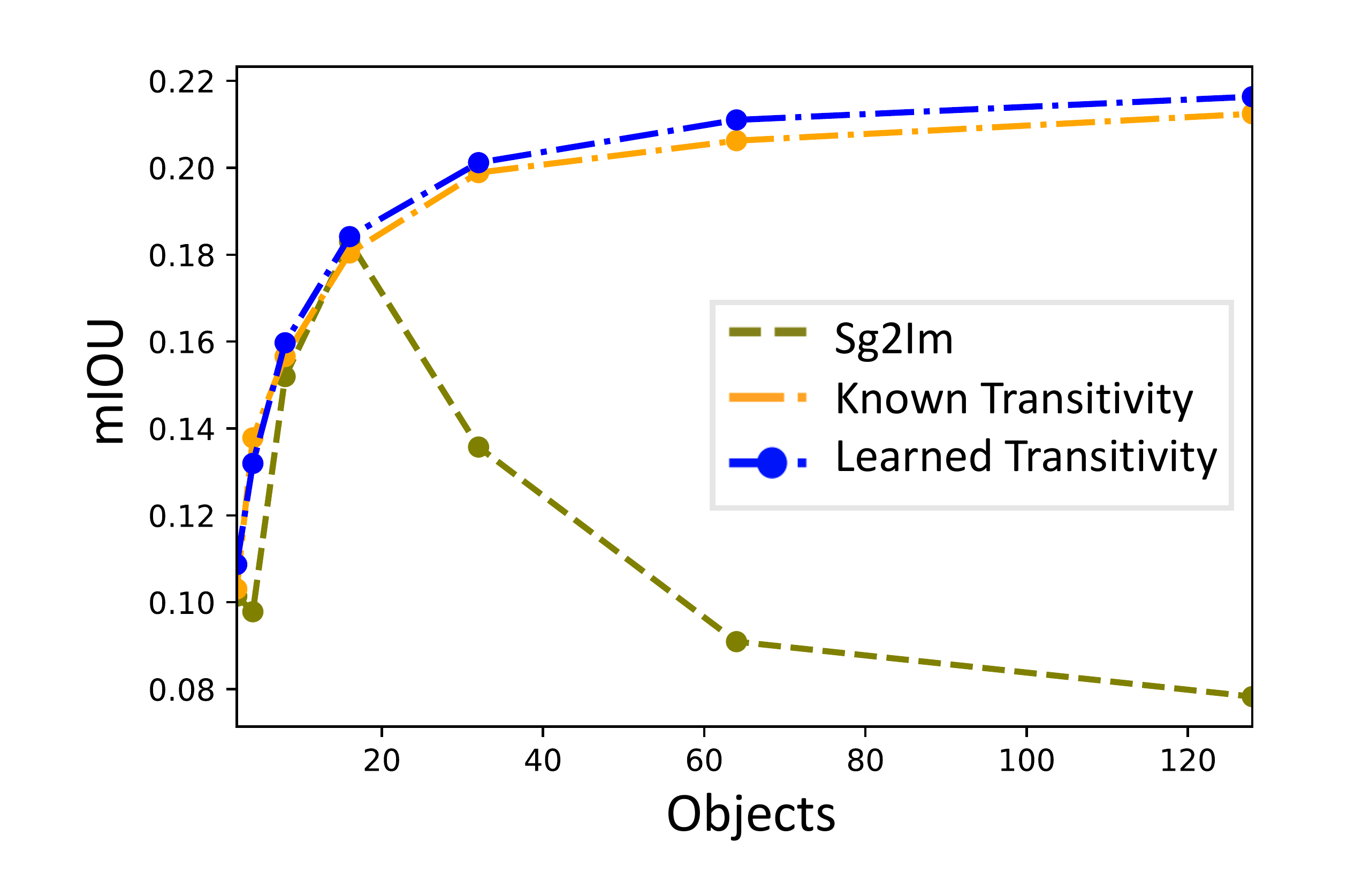}}
    \end{subfigure}
    \caption{\small{Synthetic dataset results. (a-b) The effect of the number of GCN layers on accuracy. Curves denote IOU performance as a function of the number of objects. Each point is a model trained and tested on a fixed number of objects given by the $x$ axis. (c) Out of sample number of objects. The model is trained on $16$ objects and evaluated on up to $128$ objects.}}
    %\vspace{-15pt}
    \label{fig:syntheticone}
\end{figure*}
%%%%%%%%%%%%%%%%%%%%%%%%%%%%%%%%%%%%%%%%%%

We train these models with two and four $GCN$ layers for up to $32$ objects. Additionally, to evaluate generalization to a different number of objects at test time, we train models with eight $GCN$ layers on 16 objects and test on up to $128$ objects. Results are shown in \figref{fig:syntheticone}a-b. First, it can be seen that the baseline performs significantly worse than transitivity based models. Second, ``Learned Transitivity'' closely matches ``Known Transitivity'' indicating that the model successfully learned which relations are transitive (we also manually confirmed this by inspecting $\theta^{trans}$). Third, the baseline model requires more layers to correctly capture scenes with more objects, whereas our model performs well with two layers. This suggests that WSGC indeed improves generalization ability by capturing invariances. \figref{fig:syntheticone}c shows that our model also generalizes well when evaluated on a much larger set of objects than what it has seen at training time, whereas the accuracy of the baseline severely degrades in this case.
% , the advantages of the transitive property with respect to the ability to generalize across increasingly larger number of objects is demonstrated, due to the transitive property. Lastly, both \ref{fig:syntheticone} and  \ref{fig:synthetictwo} indicate that our proposed soft learned transitive model is able to achieve similar performance to the hard transitive model on the particular synthetic dataset. 
\newline
{\bf Layout Accuracy on Packed Scenes.} Layout generation is particularly challenging in packed scenes. To quantify this, we evaluate on the Packed COCO and VG datasets. Since Sg2Im~\cite{johnson2018image}, PasteGAN~\cite{li2019pastegan}, and Grid2Im~\cite{ashual2019specifying} use the same SG-to-layout module, we compare WSGC only to Sg2Im~\cite{johnson2018image}. We test Sg2Im with 5,8 and 16 GCN layers to test the effect of model capacity.
%let the baseline use 8 and 16 GCN layers (the SG-to-layout module is composed of several GCN layers as in Sg2Im~\cite{johnson2018image}) to test the effect of model capacity.
The Packed setting in Table~\ref{tab:standard} shows that WSGC improves layout on all metrics. 

% To further test the effect of model capacity, we trained Sg2Im models with $32,64$ layers on Packed COCO, resulting in IOU of $36.93, 11.65$. We also trained an Sg2Im model with $1024$ hidden units and $16$ layers, and IOU deteriorated to $37.01$. These results suggests that increasing the capacity of Sg2Im leads to overfitting, and that WSGC improvement is indeed due to canonicalization.

%Increasing the GCN layer size to $1024$ for the baseline with $16$ layers also deteriorated IOU to $37.01$. These results indicate that increasing capacity comes at the price of overfitting.}
% \amirg{Roi, you need to explain the results here more clearly. Start by explaining just the results in the table (for Packed). Then if additional experiments were done, explain --why--- they were done and what the results show.}

We also evaluate on the ``standard'' COCO/VG setting, which contain relatively few objects, and we therefore do not expect WSGC to improve there. Results in Table~\ref{tab:standard} show comparable performance to the baselines. In addition, manual inspection revealed that the learned $p^{conv}$ and $p^{trans}$ are overall aligned with expected values (See Supplementary). Finally, the results in the standard setting also show that increasing GCN size for Sg2Im~\cite{johnson2018image} results in overfitting.
% \secref{sec:supp:sg-to-layout:weights} rrr
%%%%%%%%%%%%%% Table 1 - results on packed and standard %%%%%%%%%%%%%%
\begin{table}[t!]
  \centering
  \setlength{\tabcolsep}{0.75mm}
  \scalebox{0.8}{
\footnotesize
    \begin{tabular}{c|cccccc|cccccc}
    \hline
       & \multicolumn{6}{c|}{Standard} & \multicolumn{6}{c}{Packed} \\ \cmidrule(l{0pt}r{0pt}){2-7} \cmidrule(l{0pt}r{0pt}){8-13}
    %   \cmidrule(l{2pt}r{2pt}){2-3}
      Method & \multicolumn{2}{c|}{mIOU} & \multicolumn{2}{c|}{R@0.3} & \multicolumn{2}{c|}{R@0.5} & \multicolumn{2}{c|}{mIOU} & \multicolumn{2}{c|}{R@0.3} & \multicolumn{2}{c}{R@0.5} \\
      & COCO & VG & COCO & VG & COCO & VG & COCO & VG & COCO & VG & COCO & VG \\\hline \hline
      Sg2Im \cite{johnson2018image} 5 $GCN${\tablefootnote{Results copied from manuscript. \label{frompaper}}}&     
      {-} & {-} & {52.4} & {21.9} & {32.2} & {10.6} & {-} & {-} & {-} & {-} & {-} & {-} \\ 
      \hline
      Sg2Im \cite{johnson2018image} 5 $GCN${\tablefootnote{Our implementation of \cite{johnson2018image}. This is the same as our model without WSGC.\label{ours}}}
      & 41.7 & 16.9 & 62.6 & 24.7 & 37.5 & 9.7 & 35.8 & 25.4 & 56.0 & 36.2 & 25.3 & 15.8 \\
      
      Sg2Im \cite{johnson2018image} 8 $GCN^{\ref{ours}}$  
      & 41.5 & \textbf{18.3} & 62.9 & \textbf{26.2} & 38.1 & 10.6 & 37.2 & 25.8 & 58.6 & 36.9 & 26.4 & 15.9 \\

      Sg2Im \cite{johnson2018image} 16 $GCN^{\ref{ours}}$  
      & 40.8 & 16.4 & 61.4 & 23.3 & 36.6 & 7.8 & 37.7 & 27.1 & 60.3 & 39.0 & 26.6 & 17.0 \\
      \hline
    
      WSGC 5 $GCN$ (ours) & \textbf{41.9} & 18.0 & \textbf{63.3} & 25.9 & \textbf{38.2} & 10.6 & \textbf{39.3} & \textbf{28.5} & \textbf{62.6} & \textbf{42.4} & \textbf{30.1} & \textbf{18.3} \\
      \hline
    \end{tabular}
  }
  \caption{\small{
    Accuracy of predicted bounding boxes. We consider two different data settings: ``Standard'' and ``Packed''. (a) \textbf{Standard}: Training and evaluation is on VG images with 3 to 10 objects, and COCO images with 3 to 8 objects. (b) \textbf{Packed}: Training and evaluation is on images with 16 or more objects.}
  }
%   \vspace{-10pt}
  \label{tab:standard}
\end{table}
%%%%%%%%%%%%%%%%%%%%%%%%%%%% 

%%%%%%%%%%%%%% Table 2 - generalization and adversarial %%%%%%%%%%%%%%
\begin{table}[t!]
  \centering
  \setlength{\tabcolsep}{1mm}
  \scalebox{0.9}{
    \begin{tabular}{c|ccc|ccc}
    \hline
      \multirow{2}{*}{Method} & \multicolumn{3}{|c|}{Semantically Equivalent} & \multicolumn{3}{c}{Noisy SGs} \\
      \cmidrule(l{0pt}r{0pt}){2-4} \cmidrule(l{0pt}r{0pt}){5-7}
      & \multicolumn{1}{c|}{mIOU} & \multicolumn{1}{c|}{R@0.3} & \multicolumn{1}{c|}{R@0.5} & \multicolumn{1}{c|}{mIOU} & \multicolumn{1}{c|}{R@0.3} & \multicolumn{1}{c}{R@0.5} \\

      \hline \hline
      Sg2Im \cite{johnson2018image} 5 $GCN^{\ref{ours}}$ & 21.8 & 29.5 & 10.7 & 29.4 & 42.9 & 17.8   \\
      Sg2Im \cite{johnson2018image} 8 $GCN^{\ref{ours}}$   & 23.6 & 33.2 & 11.4 & 29.9 & 43.7 & 18.8   \\
      Sg2Im \cite{johnson2018image} 16 $GCN^{\ref{ours}}$  & 21.6 & 29.0 & 10.1 & 28.7 & 41.8 & 17.7 \\ 
      \hline
    %   WSGC (previous) & \textbf{27.6} & \textbf{39.4} & \textbf{18.5} \\
      WSGC 5 $GCN$ (ours) & \textbf{35.3} & \textbf{53.2} & \textbf{25.7} & \textbf{31.8} & \textbf{46.6} & \textbf{21.9} \\
      \hline
    \end{tabular}}
  %\vspace{1mm}
 \caption{Evaluating the robustness of the learned canonical representation for models which were trained on Packed COCO. For each SG, a \textbf{semantically equivalent} SG is sampled and evaluated at test time. Additionally, models are evaluated on \textbf{Noisy SGs}, for which edges contain $10\%$ randomly chosen relations.
  }
%   \vspace{-2mm}
  %\vspace{-20pt}
  \label{tab:canonical}
\end{table}

{\bf Generalization on Semantically Equivalent Graphs.} A key advantage of WSGC is that it produces similar layouts for semantically equivalent graphs. This is not true for methods that do not use canonicalization. To test the effectiveness of this property, we modify the test set such that input SGs are replaced with semantically equivalent variations. For example if the original SG was $(A,\mbox{right},B)$ we may change it to $(B,\mbox{left},A)$. To achieve this, we generate a semantically equivalent SG by randomly choosing to include or exclude edges which do not change the semantics of the SG. We evaluate on the Packed COCO dataset. Results are shown in Table \ref{tab:canonical} and qualitative examples are shown in Figure~\ref{fig:canonicalization}. It can be seen that WSGC significantly outperforms the baselines.
%, demonstrating the advantage of using canonical representations. The results also show that baseline models are \roei{susceptible} \ignorebig{very sensitive} to semantic-equivalence changes.

{\bf Testing Robustness to Input SGs}. Here we ask what happens when input SGs are modified by adding ``noisy'' edges. This could happen due to noise in the annotation process or even adversarial modifications. Ideally, we would like the generation model to be robust to small SG noise. We next analyze how such modifications affect the model by randomly modifying $10\%$ of the relations in the COCO data. As can be seen in Table~\ref{tab:canonical}, the WSGC model can better handle  noisy SGs than the baseline. We further note that our model achieves good results on the VG dataset, which was manually annotated, suggesting it is robust to annotation noise.
 The results in Table~\ref{tab:canonical} also show the Sg2Im generalization deteriorates when growing from 8 to 16 layers, suggesting that the effect of canonicalization cannot be achieved by just increasing model complexity.

%%%%%%%%%%%%%%%%% Figure Semantically Equivalent Graphs  %%%%%%%%%%%%%%%%%%%%%%%%%
\begin{figure*}[t!]
    \centering
    \includegraphics[width=\linewidth]{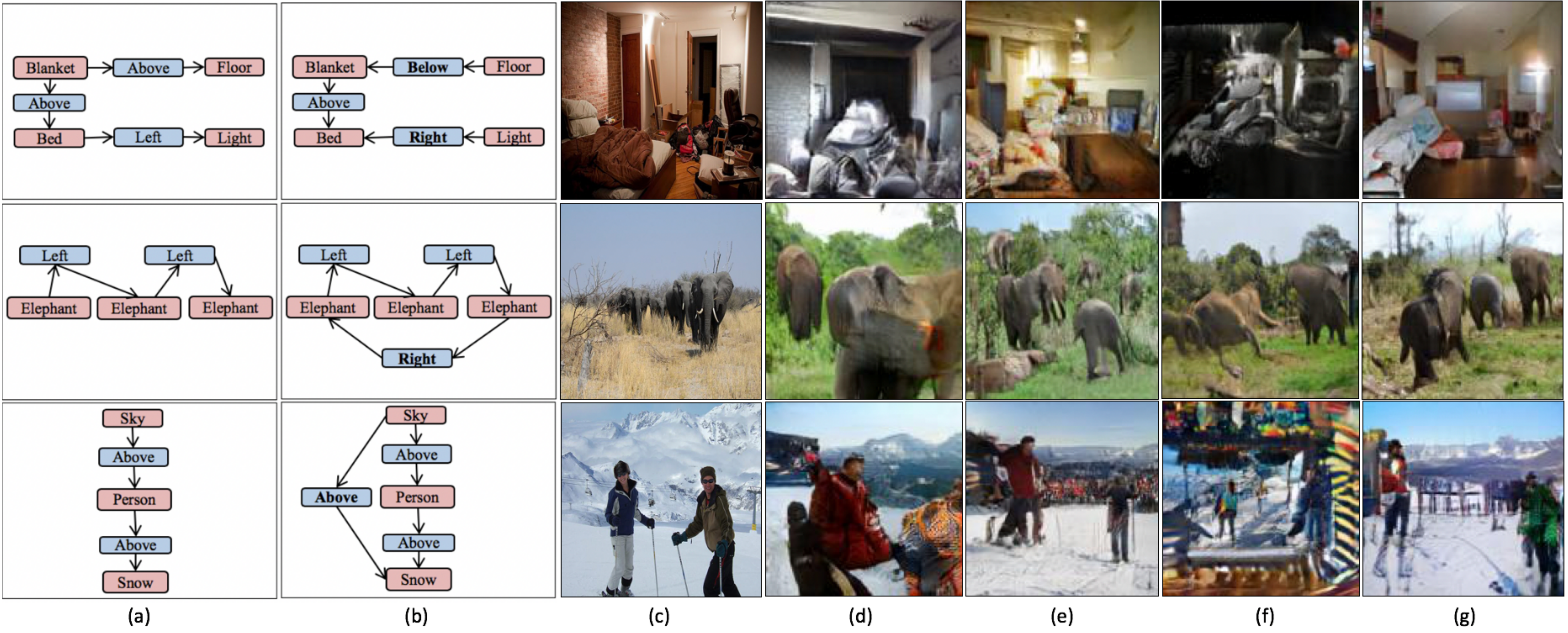}
    \caption{\small{Generalization from Semantically Equivalent Graphs. Each input SG is changed to a semantically equivalent SG at test time. The layout-to-image model is LostGAN~\cite{Sun_2019_ICCV} and different SG-to-layout models are tested. (a) Original SG (partial). (b) A  modified semantically equivalent SG (partial). (c) GT image. (d-e) Sg2Im~\cite{johnson2018image} and WSGC for the original SG. (f-g) Sg2Im~\cite{johnson2018image} and WSGC for the modified SG.}}
    \label{fig:canonicalization}
    % \vspace{-10pt}
\end{figure*}

%%%%%%%%%%%%%% Table 3 - generation and human eval %%%%%%%%%%%%%%
\begin{table}[t!]
\small{
  \setlength{\tabcolsep}{2mm}
  \centering{
  \begin{tabular}{c|ccc}
  \hline
    \multirow{2}{*}{Method} & \multicolumn{2}{c|}{Inception} & \multicolumn{1}{c}{Human} \\
              \cmidrule(l{0pt}r{0pt}){2-4}
    & \multicolumn{1}{c|}{COCO}  & \multicolumn{1}{c|}{VG} 
    & \multicolumn{1}{c}{CLEVR} \\
    % \multicolumn{2}{c|}{R@0.5} Inception \\
    \hline\hline               
     Sg2Im~\cite{johnson2018image} & $5.4 \pm 0.3$ & $7.6 \pm 1.0$ & $3.2\%$ \\
     WSGC (ours) & \textbf{5.6 $\pm$ 0.1}  & \textbf{8.0 $\pm$ 1.1} & $96.8\%$ \\
     \hline
     GT Layout &  5.5 $\pm$ 0.4 & 8.2 $\pm$ 1.0 & - \\
     \hline
  \end{tabular}}
  %\vspace{1mm}
  \caption{Results for SG-to-image on \textbf{Packed} datasets ($16+$ objects). For VG and COCO we use the layout-to-image architecture of \textbf{LostGAN}~\cite{Sun_2019_ICCV} and test the effect of different SG-to-layout models. For CLEVR, we use our \textbf{AttSPADE} generator.}
 % \vspace{2mm}
  %\vspace{-25pt}
  \label{tab:genall}
  }
 \end{table}
%%%%%%%%%%%%%% %%%%%%%%%%%%%% %%%%%%%%%%%%%% 

%\vspace{-10pt}
\subsection{Scene-graph-to-image Generation}
%\vspace{-5pt}
To test the contribution of our proposed Scene-Graph-to-layout approach to the overall task of SG-to-image generation, we further test it in an end-to-end pipeline for generating images. For Packed COCO and Packed VG, we compare our proposed approach with  Sg2Im~\cite{johnson2018image} using a fixed pre-trained LostGAN \cite{sun2019image} as the layout-to-image generator. For CLEVR, we use WSGC and our own AttSPADE generator (see \secref{sec:sg_to_image}). We trained the model on images with a maximum of 10 objects and tested on larger scenes with 16+ objects.

We evaluate performance using Inception score \cite{salimans2016improved} and a study where Amazon Mechanical Turk raters were asked to rank the quality of two images: one generated using our layouts, and the other using SG2Im layouts.\footnote{We used raters only for the CLEVR data, where no GT images or bounding boxes are available for 16+ objects, and thus Inception cannot be evaluated.} Results are provided in Table \ref{tab:genall}. For COCO and VG it can be seen that WSGC improves the overall quality of generated images. In CLEVR, Table \ref{tab:genall}, WSGC outperforms Sg2Im in terms of IOU. {{In ${96.8\%}$ of the cases, our generated images were ranked higher than SG2Im}}. Finally, Figures \ref{fig:clevr_ashes} and \ref{fig:genresults} provide qualitative examples and comparisons of images generated based on CLEVR and COCO. More generation results on COCO and VG can be seen in the Supplementary.

\begin{figure*}[t!]
    \centering
    % \vspace{-5pt}
    \includegraphics[width=\linewidth]{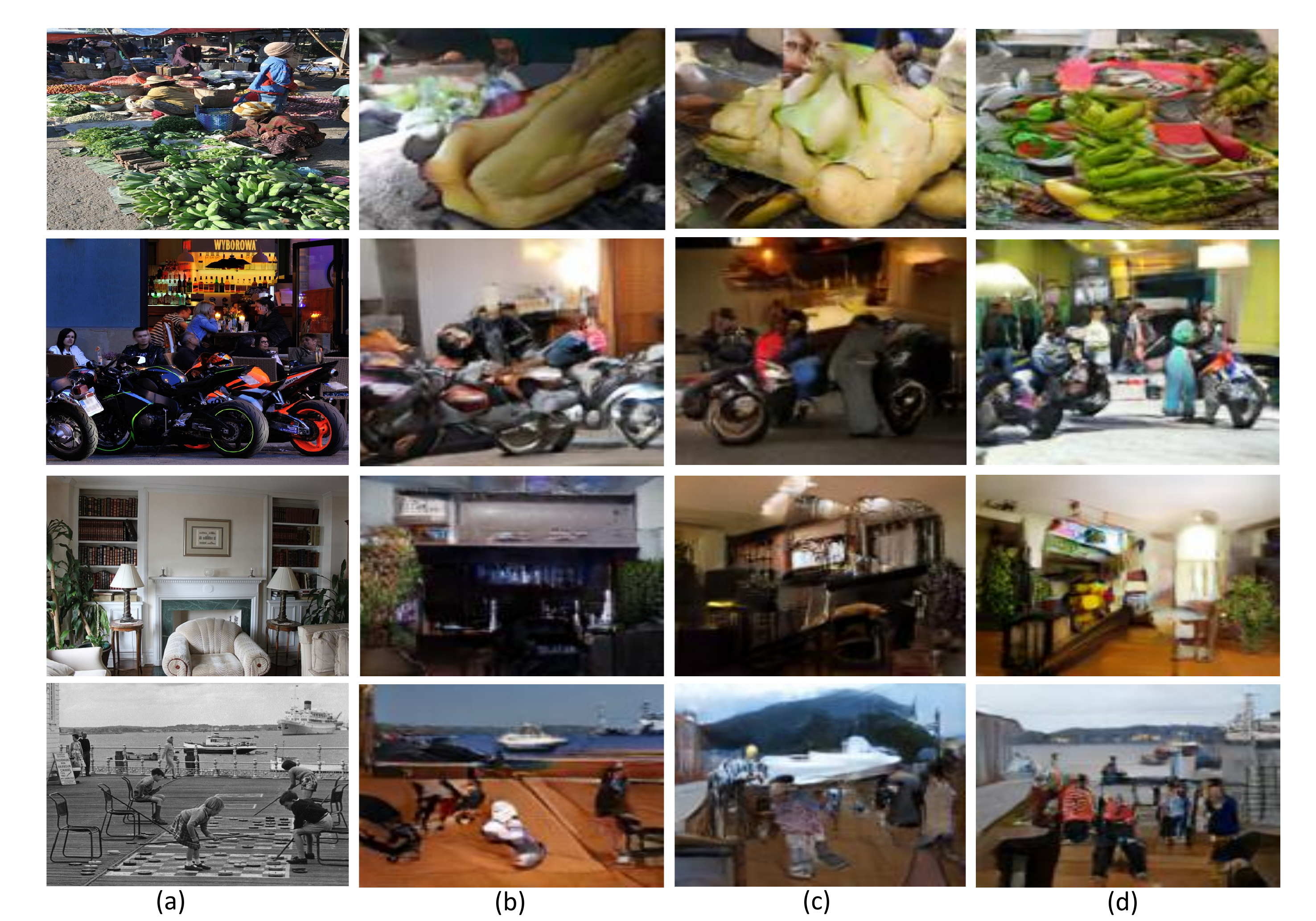}
    \caption{Selected Scene-graph-to-image generation results on the Packed-COCO dataset. Here, we fix the layout-to-image model to LostGAN~\cite{Sun_2019_ICCV}, while changing different scene graph-to-layout models. (a) GT image. (b) Generation from GT layout. (c) Sg2Im~\cite{johnson2018image} model with LostGAN~\cite{Sun_2019_ICCV}. (d) Our WSGC model with LostGAN~\cite{Sun_2019_ICCV}.}
    \label{fig:genresults}
    %\vspace{-10pt}
\end{figure*}

%%%%%%%%%%%%%%%%%%%%%%% Conclusions %%%%%%%%%%%%%%%%%%%%%%% 
%\vspace{-10pt}
\section{Conclusion}
\label{sec:conclude}
%\vspace{-5pt}
%. Our results also suggest improved accuracy even against baselines with more GCN layers.
We presented a method for mapping SGs to images that is invariant to a set of logical equivalences. Our experiments show that the method results in improved layouts and image quality. We also observe that canonical representations allow one to handle packed scenes with fewer layers than non-canonical approaches. Intuitively, this is because the closure calculation effectively propagates information across the graph, and thus saves the need for propagation using neural architectures. The advantage is that this step is hard-coded and not learned, thus reducing the size of the model. Our results show the advantage of preprocessing an SG before layout generation. Here we studied this in the context of two types of relation properties. However, it can be extended to more complex ones. In this case, finding the closure will be computationally hard, and would amount to performing inference in Markov Logic Networks \cite{richardson2006markov}. On the other hand, it is likely that modeling such invariances will result in further robustness of the learned models, and is thus an interesting  direction for future work.

%%%%%%%%%%%%%%% Ack %%%%%%%%%%%%%%%
%\vspace{-5pt}
\section*{Acknowledgements}
\label{sec:ack}
%\vspace{-5pt}
% \noindent \textbf{Acknowledgements.}
This project has received funding from the European Research Council (ERC) under the European Unions Horizon 2020 research and innovation programme (grant ERC HOLI 819080). Prof. Darrell's group was supported in part by DoD, NSF, BAIR, and BDD. This work was completed in partial fulfillment for the Ph.D degree of the first author.

% We would also like to thank Xiaolong Wang for valuable feedback and comments on drafts.
% \input{supp.tex}

% \clearpage
% ---- Bibliography ----%
\bibliographystyle{splncs04}
\bibliography{egbib}

\begin{thebibliography}{10}
\providecommand{\url}[1]{\texttt{#1}}
\providecommand{\urlprefix}{URL }
\providecommand{\doi}[1]{https://doi.org/#1}

\bibitem{ashual2019specifying}
Ashual, O., Wolf, L.: Specifying object attributes and relations in interactive
  scene generation. In: Proceedings of the IEEE International Conference on
  Computer Vision. pp. 4561--4569 (2019)

\bibitem{bau2019gandissect}
Bau, D., Zhu, J.Y., Strobelt, H., Zhou, B., Tenenbaum, J.B., Freeman, W.T.,
  Torralba, A.: Gan dissection: Visualizing and understanding generative
  adversarial networks. In: Proceedings of the International Conference on
  Learning Representations (ICLR) (2019)

\bibitem{brock2018large}
Brock, A., Donahue, J., Simonyan, K.: Large scale {GAN} training for high
  fidelity natural image synthesis. In: International Conference on Learning
  Representations (2019)

\bibitem{Caesar2018COCOStuffTA}
Caesar, H., Uijlings, J.R.R., Ferrari, V.: Coco-stuff: Thing and stuff classes
  in context. In: IEEE Conference on Computer Vision and Pattern Recognition
  (CVPR) (2018)

\bibitem{che2016mode}
Che, T., Li, Y., Jacob, A.P., Bengio, Y., Li, W.: Mode regularized generative
  adversarial networks. arXiv preprint arXiv:1612.02136  (2016)

\bibitem{chen2017photographic}
Chen, Q., Koltun, V.: Photographic image synthesis with cascaded refinement
  networks. In: Proceedings of the IEEE International Conference on Computer
  Vision. pp. 1511--1520 (2017)

\bibitem{chen2016infogan}
Chen, X., Duan, Y., Houthooft, R., Schulman, J., Sutskever, I., Abbeel, P.:
  Infogan: Interpretable representation learning by information maximizing
  generative adversarial nets. In: Advances in neural information processing
  systems. pp. 2172--2180 (2016)

\bibitem{deng2018probabilistic}
Deng, Z., Chen, J., Fu, Y., Mori, G.: Probabilistic neural programmed networks
  for scene generation. In: Advances in Neural Information Processing Systems.
  pp. 4028--4038 (2018)

\bibitem{dong2017adversarial}
Dong, H., Yu, S., Wu, C., Guo, Y.: Semantic image synthesis via adversarial
  learning. In: Proceedings of the IEEE International Conference on Computer
  Vision. pp. 5706--5714 (2017)

\bibitem{floyd1962algorithm}
Floyd, R.W.: Algorithm 97: shortest path. Communications of the ACM
  \textbf{5}(6), ~345 (1962)

\bibitem{goodfellow2014generative}
Goodfellow, I., Pouget-Abadie, J., Mirza, M., Xu, B., Warde-Farley, D., Ozair,
  S., Courville, A., Bengio, Y.: Generative adversarial nets. In: Advances in
  neural information processing systems. pp. 2672--2680 (2014)

\bibitem{herzig2019stag}
Herzig, R., Levi, E., Xu, H., Gao, H., Brosh, E., Wang, X., Globerson, A.,
  Darrell, T.: Spatio-temporal action graph networks. In: The IEEE
  International Conference on Computer Vision (ICCV) Workshops (Oct 2019)

\bibitem{herzig2018mapping}
Herzig, R., Raboh, M., Chechik, G., Berant, J., Globerson, A.: Mapping images
  to scene graphs with permutation-invariant structured prediction. In:
  Advances in Neural Information Processing Systems (NIPS) (2018)

\bibitem{hong2018inferring}
Hong, S., Yang, D., Choi, J., Lee, H.: Inferring semantic layout for
  hierarchical text-to-image synthesis. In: Proceedings of the IEEE Conference
  on Computer Vision and Pattern Recognition. pp. 7986--7994 (2018)

\bibitem{huang2018multimodal}
Huang, X., Liu, M.Y., Belongie, S., Kautz, J.: Multimodal unsupervised
  image-to-image translation. In: Proceedings of the European Conference on
  Computer Vision (ECCV). pp. 172--189 (2018)

\bibitem{pix2pix2017}
Isola, P., Zhu, J.Y., Zhou, T., Efros, A.A.: Image-to-image translation with
  conditional adversarial networks. In: Proceedings of the IEEE conference on
  computer vision and pattern recognition. pp. 1125--1134 (2017)

\bibitem{johnson2018image}
Johnson, J., Gupta, A., Fei-Fei, L.: Image generation from scene graphs. In:
  IEEE Conference on Computer Vision and Pattern Recognition (CVPR) (2018)

\bibitem{johnson2017clevr}
Johnson, J., Hariharan, B., van~der Maaten, L., Fei-Fei, L., Zitnick, C.L.,
  Girshick, R.: Clevr: A diagnostic dataset for compositional language and
  elementary visual reasoning. In: CVPR (2017)

\bibitem{johnson2015image}
Johnson, J., Krishna, R., Stark, M., Li, L.J., Shamma, D., Bernstein, M.,
  Fei-Fei, L.: Image retrieval using scene graphs. In: Proceedings of the IEEE
  Conference on Computer Vision and Pattern Recognition. pp. 3668--3678 (2015)

\bibitem{karras2019style}
Karras, T., Laine, S., Aila, T.: A style-based generator architecture for
  generative adversarial networks. In: Proceedings of the IEEE Conference on
  Computer Vision and Pattern Recognition. pp. 4401--4410 (2019)

\bibitem{kingma2014adam}
Kingma, D.P., Ba, J.: Adam: A method for stochastic optimization. arXiv
  preprint arXiv:1412.6980  (2014)

\bibitem{kingma2015adam}
Kingma, D.P., Ba, J.: Adam: A method for stochastic optimization. In: Int.
  Conf. on Learning Representations (2015)

\bibitem{kingma2013auto}
Kingma, D.P., Welling, M.: Auto-encoding variational bayes. arXiv preprint
  arXiv:1312.6114  (2013)

\bibitem{kipf2016semi}
Kipf, T.N., Welling, M.: Semi-supervised classification with graph
  convolutional networks. arXiv preprint arXiv:1609.02907  (2016)

\bibitem{referential_relationships}
Krishna, R., Chami, I., Bernstein, M.S., Fei{-}Fei, L.: Referring
  relationships. ECCV  (2018)

\bibitem{krishnavisualgenome}
Krishna, R., Zhu, Y., Groth, O., Johnson, J., Hata, K., Kravitz, J., Chen, S.,
  Kalantidis, Y., Li, L.J., Shamma, D.A., Bernstein, M., Fei-Fei, L.: Visual
  genome: Connecting language and vision using crowdsourced dense image
  annotations. ArXiv e-prints  (2016)

\bibitem{li2019pastegan}
Li, Y., Ma, T., Bai, Y., Duan, N., Wei, S., Wang, X.: Pastegan: A
  semi-parametric method to generate image from scene graph. NeurIPS  (2019)

\bibitem{lim2017geometric}
Lim, J.H., Ye, J.C.: Geometric gan. arXiv preprint arXiv:1705.02894  (2017)

\bibitem{liu2017unsupervised}
Liu, M.Y., Breuel, T., Kautz, J.: Unsupervised image-to-image translation
  networks. In: Advances in neural information processing systems. pp. 700--708
  (2017)

\bibitem{mao2017least}
Mao, X., Li, Q., Xie, H., Lau, Y.R., Wang, Z., Smolley, S.P.: Least squares
  generative adversarial networks. In: Proc. Int. Conf. Comput. Vision (2017)

\bibitem{CVPR2020_SomethingElse}
Materzynska, J., Xiao, T., Herzig, R., Xu, H., Wang, X., Darrell, T.:
  Something-else: Compositional action recognition with spatial-temporal
  interaction networks. In: CVPR (2020)

\bibitem{mirza2014conditional}
Mirza, M., Osindero, S.: Conditional generative adversarial nets. arXiv
  preprint arXiv:1411.1784  (2014)

\bibitem{mittal2019interactive}
Mittal, G., Agrawal, S., Agarwal, A., Mehta, S., Marwah, T.: Interactive image
  generation using scene graphs. arXiv preprint arXiv:1905.03743  (2019)

\bibitem{miyato2018spectral}
Miyato, T., Kataoka, T., Koyama, M., Yoshida, Y.: Spectral normalization for
  generative adversarial networks. In: Int. Conf. on Learning Representations
  (2018)

\bibitem{nam2018mani}
Nam, S., Kim, Y., Kim, S.J.: Text-adaptive generative adversarial networks:
  manipulating images with natural language. In: Advances in Neural Information
  Processing Systems. pp. 42--51 (2018)

\bibitem{odena2017conditional}
Odena, A., Olah, C., Shlens, J.: Conditional image synthesis with auxiliary
  classifier gans. In: Proceedings of the 34th International Conference on
  Machine Learning-Volume 70. pp. 2642--2651. JMLR. org (2017)

\bibitem{van2016conditional}
Van~den Oord, A., Kalchbrenner, N., Espeholt, L., Vinyals, O., Graves, A.,
  et~al.: Conditional image generation with pixelcnn decoders. In: Advances in
  neural information processing systems. pp. 4790--4798 (2016)

\bibitem{oord2016pixel}
Oord, A.v.d., Kalchbrenner, N., Kavukcuoglu, K.: Pixel recurrent neural
  networks. arXiv preprint arXiv:1601.06759  (2016)

\bibitem{park2019semantic}
Park, T., Liu, M.Y., Wang, T.C., Zhu, J.Y.: Semantic image synthesis with
  spatially-adaptive normalization. In: Proceedings of the IEEE Conference on
  Computer Vision and Pattern Recognition. pp. 2337--2346 (2019)

\bibitem{qiao2019mirrorgan}
Qiao, T., Zhang, J., Xu, D., Tao, D.: Mirrorgan: Learning text-to-image
  generation by redescription. In: Proceedings of the IEEE Conference on
  Computer Vision and Pattern Recognition. pp. 1505--1514 (2019)

\bibitem{raboh2020dsg}
Raboh, M., Herzig, R., Chechik, G., Berant, J., Globerson, A.: Differentiable
  scene graphs. In: Winter Conf. on App. of Comput. Vision (2020)

\bibitem{radford2015unsupervised}
Radford, A., Metz, L., Chintala, S.: Unsupervised representation learning with
  deep convolutional generative adversarial networks. arXiv preprint
  arXiv:1511.06434  (2015)

\bibitem{reed2016generative}
Reed, S., Akata, Z., Yan, X., Logeswaran, L., Schiele, B., Lee, H.: Generative
  adversarial text to image synthesis. arXiv preprint arXiv:1605.05396  (2016)

\bibitem{reed2016learning}
Reed, S.E., Akata, Z., Mohan, S., Tenka, S., Schiele, B., Lee, H.: Learning
  what and where to draw. In: Advances in Neural Information Processing
  Systems. pp. 217--225 (2016)

\bibitem{richardson2006markov}
Richardson, M., Domingos, P.: Markov logic networks. Machine learning
  \textbf{62}(1-2),  107--136 (2006)

\bibitem{salimans2016improved}
Salimans, T., Goodfellow, I., Zaremba, W., Cheung, V., Radford, A., Chen, X.:
  Improved techniques for training gans. In: Advances in neural information
  processing systems. pp. 2234--2242 (2016)

\bibitem{Schroeder2019ICCV}
Schroeder, B., Tripathi, S., Tang, H.: Triplet-aware scene graph embeddings.
  In: The IEEE International Conference on Computer Vision (ICCV) Workshops
  (Oct 2019)

\bibitem{schuster2015generating}
Schuster, S., Krishna, R., Chang, A., Fei-Fei, L., Manning, C.D.: Generating
  semantically precise scene graphs from textual descriptions for improved
  image retrieval. In: Proceedings of the fourth workshop on vision and
  language. pp. 70--80 (2015)

\bibitem{sharma2018chatpainter}
Sharma, S., Suhubdy, D., Michalski, V., Kahou, S.E., Bengio, Y.: Chatpainter:
  Improving text to image generation using dialogue. arXiv preprint
  arXiv:1802.08216  (2018)

\bibitem{Sun_2019_ICCV}
Sun, W., Wu, T.: Image synthesis from reconfigurable layout and style. In: The
  IEEE International Conference on Computer Vision (ICCV) (October 2019)

\bibitem{sun2019image}
Sun, W., Wu, T.: Image synthesis from reconfigurable layout and style. In:
  Proceedings of the IEEE International Conference on Computer Vision. pp.
  10531--10540 (2019)

\bibitem{taigman2016unsupervised}
Taigman, Y., Polyak, A., Wolf, L.: Unsupervised cross-domain image generation.
  arXiv preprint arXiv:1611.02200  (2016)

\bibitem{tan2019text2scene}
Tan, F., Feng, S., Ordonez, V.: Text2scene: Generating compositional scenes
  from textual descriptions. In: Proceedings of the IEEE Conference on Computer
  Vision and Pattern Recognition. pp. 6710--6719 (2019)

\bibitem{Subarna2019ICLRW}
Tripathi, S., Bhiwandiwalla, A., Bastidas, A., Tang, H.: Heuristics for image
  generation from scene graphs. ICLR LLD workshop  (2019)

\bibitem{wang2018pix2pixHD}
Wang, T.C., Liu, M.Y., Zhu, J.Y., Tao, A., Kautz, J., Catanzaro, B.:
  High-resolution image synthesis and semantic manipulation with conditional
  gans. In: Proc. Conf. Comput. Vision Pattern Recognition (2018)

\bibitem{Williams92REINFORCE}
Williams, R.J.: Simple statistical gradient-following algorithms for
  connectionist reinforcement learning. Machine Learning pp. 229--256 (1992)

\bibitem{xiaotian2019tell}
Xiaotian, Q., ZHENG, Q., Ying, C., Rynson, W.: Tell me where i am: Object-level
  scene context prediction. In: The 32nd meeting of the IEEE/CVF Conference on
  Computer Vision and Pattern Recognition (CVPR 2019). IEEE (2019)

\bibitem{xu2019scene}
Xu, N., Liu, A.A., Liu, J., Nie, W., Su, Y.: Scene graph captioner: Image
  captioning based on structural visual representation. Journal of Visual
  Communication and Image Representation pp. 477--485 (2019)

\bibitem{xu2018attngan}
Xu, T., Zhang, P., Huang, Q., Zhang, H., Gan, Z., Huang, X., He, X.: Attngan:
  Fine-grained text to image generation with attentional generative adversarial
  networks. In: Proceedings of the IEEE Conference on Computer Vision and
  Pattern Recognition. pp. 1316--1324 (2018)

\bibitem{yin2019semantics}
Yin, G., Liu, B., Sheng, L., Yu, N., Wang, X., Shao, J.: Semantics
  disentangling for text-to-image generation. In: Proceedings of the IEEE
  Conference on Computer Vision and Pattern Recognition. pp. 2327--2336 (2019)

\bibitem{deep_sets}
Zaheer, M., Kottur, S., Ravanbakhsh, S., Poczos, B., Salakhutdinov, R.R.,
  Smola, A.J.: Deep sets. In: Advances in Neural Information Processing Systems
  30, pp. 3394--3404. Curran Associates, Inc. (2017)

\bibitem{zhang2018self}
Zhang, H., Goodfellow, I., Metaxas, D., Odena, A.: Self-attention generative
  adversarial networks. In: Int. Conf. Mach. Learning (2019)

\bibitem{zhang2017stackgan}
Zhang, H., Xu, T., Li, H., Zhang, S., Wang, X., Huang, X., Metaxas, D.N.:
  Stackgan: Text to photo-realistic image synthesis with stacked generative
  adversarial networks. In: Proceedings of the IEEE International Conference on
  Computer Vision. pp. 5907--5915 (2017)

\bibitem{zhao2019image}
Zhao, B., Meng, L., Yin, W., Sigal, L.: Image generation from layout. In:
  Proceedings of the IEEE Conference on Computer Vision and Pattern
  Recognition. pp. 8584--8593 (2019)

\bibitem{zhaobo2019layout2im}
Zhao, B., Meng, L., Yin, W., Sigal, L.: Image generation from layout. In: CVPR
  (2019)

\bibitem{zhao2016energy}
Zhao, J., Mathieu, M., LeCun, Y.: Energy-based generative adversarial network.
  arXiv preprint arXiv:1609.03126  (2016)

\bibitem{zhou2019text}
Zhou, X., Huang, S., Li, B., Li, Y., Li, J., Zhang, Z.: Text guided person
  image synthesis. In: Proceedings of the IEEE Conference on Computer Vision
  and Pattern Recognition. pp. 3663--3672 (2019)

\bibitem{zhu2017unpaired}
Zhu, J.Y., Park, T., Isola, P., Efros, A.A.: Unpaired image-to-image
  translation using cycle-consistent adversarial networks. In: Proceedings of
  the IEEE international conference on computer vision. pp. 2223--2232 (2017)

\bibitem{zhu2017toward}
Zhu, J.Y., Zhang, R., Pathak, D., Darrell, T., Efros, A.A., Wang, O.,
  Shechtman, E.: Toward multimodal image-to-image translation. In: Advances in
  Neural Information Processing Systems. pp. 465--476 (2017)

\end{thebibliography}

%%%%%%%%%%%%%%% SUPP %%%%%%%%%%%%%%%
\newpage
\setcounter{section}{0} %% Start counting from the beginning
% \newpage
% \setcounter{section}{0} %% Start counting from the beginning

%%%%%%%%%%%%%%%% SG-to-Layout  %%%%%%%%%%%%%%%%
In this supplementary file we provide additional implementation details, empirical results, and a proof of correctness for the SGC algorithm. 
\section{Scene-Graph-to-Layout} 
\label{sec:supp:sg-to-layout}
% \secref{sec:sg_to_weighted_sg}
In Section 3.2, we introduced the WSGC-E and WSGC-S methods, two different procedures proposed for mapping an input scene graph into a weighted canonicalized relation graph. As mentioned in Section 3.2, although the WSGC-E is a natural extension of the SGC procedure, it is impractical for large complex graphs whereas the WSGC-S method adds fewer edges and is thus more practical for training. In what follows, we provide additional details about WSGC-E and WSGC-S, as well as comparison and analysis.

% \subsection{Training with WSGC-E} 
\subsection{Exact Weighted Scene Graph Canonicalization (WSGC-E)} 
Next, we describe in detail the WSGC-E method for obtaining a weighted relation graph that is a natural extension of \hardmethod. The \softslowmethod{} begins with the user-specified graph $E$, with weights of one. Next two weighted completion steps are performed, corresponding to the {\hardmethod} steps.

\noindent {\bf Converse Completion:} In SGC, this step adds all converse edges. In the weighted case it makes sense to add the converse edge with its corresponding converse weight.
For example, if the graph $E$ contains the edge $(i,\text{above},j,1)$ and $p^{conv}(\text{below}|\text{above}) =0.7$, we add the edge $(j,\text{below},i,0.7)$. See \figref{fig:WSGC2}b.

\noindent {\bf Transitive Completion:} In SGC, all transitive edges are found and added. In the weighted case, a natural alternative is to set a weight of a path to be the product of weights along this path, and set the weight of a completed edge $(i,r,j)$ to be the maximum weight of a path between $i$ and $j$ times the probability $p^{trans}(r)$  that the relation is transitive. See \figref{fig:WSGC2}c. The maximum path weight problem is equivalent to maximizing the sum of log probabilities, and since these are all negative, this can be solved in polynomial time via a shortest weight path algorithm (e.g., FW). However, when there are many nodes and relations, runtime can still be substantial, and thus we offer a faster approach next.
% \secref{sec:sg_to_image}

{\bf Training with WSGC-E.} In the main text, we described the training loss and optimization for WSGC-S. Optimizing the loss for WSGC-E is similar, as we explain next. We describe the loss of the WSGC-E method for a single input scene graph $E$ and its ground truth layout $Y$. 
The parameters of the model are as follows: $\theta^{g}$ are the parameters of the GCN in Section 4, $\theta^{trans}$ are the parameters of the transitive probability (Eq. 1), and $\theta^{conv}$ are those of the converse probability (Eq. 2). Let $\theta$ denote the set of all parameters. Denote the GCN applied to this graph by $G_{\theta^g}$.
% \eqref{eq:p_trans};\eqref{eq:p_conv}
We use $L_1$ as the loss between predicted and ground-truth bounding boxes $Y$. Namely, we wish to minimize the following objective (we write WSGC instead of WSGC-E below for brevity):
\begin{equation}
L(\theta) = \normm{Y -  G_{\theta^{g}}(\text{WSGC}(E;\theta^{trans},\theta^{conv}))}_1 \quad
\label{eq:wsgce_loss}
\end{equation}
When calculating $L(\theta)$, most of the operations are standard and are differentiated automatically by PyTorch. The only apparent complication is with the minimum weight path. However, we next explain why there is actually not a problem and one can simply take the gradient of the PyTorch computation graph for $L(\theta)$, which includes the minimum-weight-path computation. 

Recall that in WSGC-E we first weight each edge by the corresponding converse weights $p^{conv}_{\theta}$. Let $w(e;\theta)$ denote the weight of edge $e$ after this weighting step. Next, we perform a transitive completion as follows. Given an edge $e'$, its new weight will be (up to the multiplicative factor of $p^{trans}$ which we leave out for brevity):  
\begin{equation}
    w_{trans}(e';\theta) = \max_{P\in\mathcal{P}}\prod_{e\in P} w(e;\theta) =  e^{\max_{P\in\mathcal{P}}\sum_{e\in P} \log{w(e;\theta)}}
\end{equation}
where $\mathcal{P}$ are all the paths between the two incident nodes of $e'$. To calculate the sub-gradient of this expression with respect to $\theta$, we note that in the exponent we have a maximum over linear functions, and thus differentiating it wrt $\theta$ corresponds to finding the maximizing $P*$ and then differentiating $\sum_{e\in P*} \log{w(e;\theta)}$.

\begin{figure*}[t!]
  \centering
    \includegraphics[width=\linewidth]{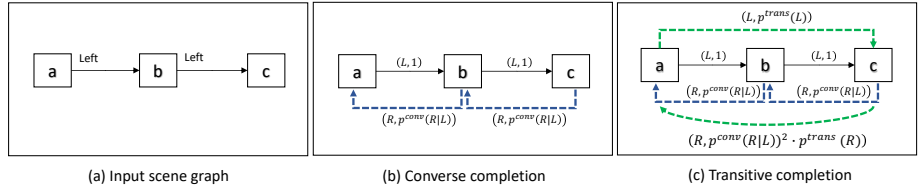}
    \caption{\small{An illustration of WSGC-E where relations are Left (L) and Right (R). (a) The input graph contains two relations with weight $1$. (b) Converse edges (blue dashed arrows) are completed with the weights $p^{conv}$. (c) Transitive edges (green dashed arrows) are added and assigned the weight of the corresponding path times $p^{trans}$.}}
    \label{fig:WSGC2}
\end{figure*}

Technically, the above suggests a very simple PyTorch implementation. Implement the computation graph for \eqref{eq:wsgce_loss}, including the non-differentiable maximum weight path computation. And then let PyTorch take gradients for this graph. Since the maximum-weight-path cannot be differentiated through, the computation will fix the maximizing path and take the gradient there, and this is indeed the correct sub-gradient, as per our discussion above.

The downside of the WSGC-E method is that it assigns weights to all edges in the graph, and for all relations, and thus computations involve a dense graph, which makes training and inference slow. This motivates our use of WSGC-S which uses sparser graphs.

\label{sec:supp:sg-to-layout:wsgce}
% \amirg{we explained the stuff below in teh main paper. We promised we would explain how it is optimized here.}

\subsection{Empirical Comparison of WSGC-E and WSGC-S}

Table \ref{supp:tab:WSGCE_WSGCS} shows a comparison of WSGC-E and WSGC-S on the standard COCO and VG datasets, where WSGC-E runs in a reasonable time so that comparison is possible. The size of the graphs on the standard datasets is less than an average of $1000$ triplets per image, while on the packed datasets it is $24,000$ triplets per image. Thus it is impossible to run the WSGC-E on packed datasets. It can be seen that the methods achieve comparable performance, suggesting that indeed WSGC-S is a scalable alternative to WSGC-E.

% Table \ref{supp:tab:WSGCE_WSGCS} shows a comparison of WSGC-E and WSGC-S on the standard COCO and VG datasets, where WSGC-E runs in reasonable time (because the size of the graphs is less than an average of $1000$ triplets per image on the standrad datasets, while in packed it reaches to $24,000$ triplets per image) so that comparison is possible. It can be seen that the methods achieve comparable performance, suggesting that indeed WSGC-S is a scalable alternative to WSGC-E. \gal{"runs in reasonable time". This section is quite vague. How faster do you expect WSGC-S to be?}\gal{Add STD/SEM?}

%and note that these results suggest that the WSGC-S method can achieve comparable results to the WSGC-E method.

% \subsection{WSGC-E Vs. WSGC-S} 
% \label{sec:supp:sg-to-layout:compare}

%%%%%%%%%%%%%% Table - WSGC-E Vs. WSGC-S %%%%%%%%%%%%%%
\begin{table}
  \centering
  \setlength{\tabcolsep}{1mm}
  \scalebox{1.0}{
    \begin{tabular}{c|ccc|ccc}
    \hline
      \multirow{2}{*}{Method} & \multicolumn{3}{|c|}{COCO} & \multicolumn{3}{c}{Visual Genome} \\
      \cmidrule(l{0pt}r{0pt}){2-4} \cmidrule(l{0pt}r{0pt}){5-7}
      & \multicolumn{1}{c|}{mIOU} & \multicolumn{1}{c|}{R@0.3} & \multicolumn{1}{c|}{R@0.5} & \multicolumn{1}{c|}{mIOU} & \multicolumn{1}{c|}{R@0.3} & \multicolumn{1}{c}{R@0.5} \\
      \hline \hline
      WSGC-S 5 $GCN$ & 41.9 & \textbf{63.3} & 38.2 & 18.0 & 25.9 & \textbf{10.6}  \\
      \hline
      WSGC-E 5 $GCN$ & \textbf{42.2} & 63.0 & \textbf{38.7} & 18.0 & \textbf{26.5} & 9.9\\
      \hline
    \end{tabular}}
    %   \vspace{10pt}
    \caption{Evaluation of WSGC-E and WSGC-S on Standard COCO and Visual Genome.}
  \label{supp:tab:WSGCE_WSGCS}
    % \vspace{-10pt}

\end{table}

\subsection{Analysis of Learned Weights \label{supp:sg-to-layout:weights}}
Our approach parameterizes the weights of converse and transitive relations and learns these parameters from data. It is interesting to see whether the learned weights recover known converse and transitive relations. 

Inspecting the converse weights $p^{conv}$ that were learned on the standard COCO dataset reveals that all weights have converged to values close to $0$ and $1$, and align well with the expected true converse relation. Specifically, weights corresponding to converse pairs such as (``below'', ``above'') all converged to $1$, while the rest of the pairs, such as (``left of'', ``inside'') converged to $0$. For transitive weights $p^{trans}$, $5/6$ of the transitive relations correctly converged to $1$ and a single relation to $0$. Concretely, ``above'', ``left of'', ``right of'', ``inside'' and ``below'' converged to $1$ while ``surrounding'' did not. The learned values are shown in Figure~\ref{fig:supp:coco_weights}.

\begin{figure*}[t!]
    \centering
    \includegraphics[width=\linewidth]{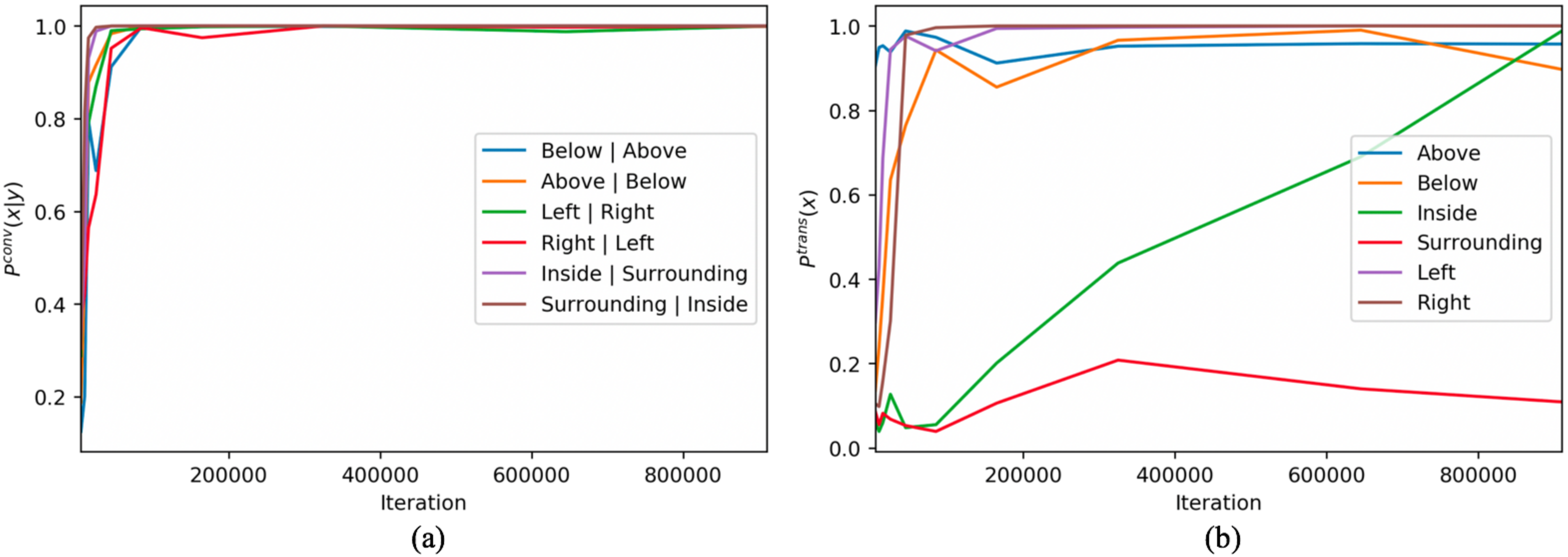}
    \caption{Learned $p^{conv}$ and $p^{trans}$ weights for the WSGC-S model on the COCO dataset. The learned values of $p^{conv}$ (see a) and of $p^{trans}$ (see b) are presented as function of training iteration.}
    \label{fig:supp:coco_weights}
\end{figure*}

\subsection{Weighted Graph Convolutional Network}
\label{sec:supp:sg-to-layout:gcn}
% Section~\ref{sec:sg_to_image}
In Section 4, we presented Graph Convolutional Network (GCN)~\cite{kipf2016semi} as a natural architecture for the SG to layout task. We use a similar approach to recent methods~\cite{ashual2019specifying,johnson2018image} for this task, but modify the GCN to our weighted scene graph. This is done by revising the graph convolution layer such that the aggregation step of each node is set to be a weighted average, where the weights are those in the canonical SG. In what follows, we provide additional details about our Weighted GCN.

Each object category $c\in\cC$ is assigned a learned embedding $\phiv_c\in\reals^D$ and each relation $r\in\cR$ is assigned a learned embedding $\psiv_r\in\reals^D$. Given an SG with $N$ objects, the GCN iteratively calculates a representation for each object and each relation in the graph. 
Let $\vv^{k}_i\in\reals^d$ be the representation of the $i^{th}$ object in the $k^{th}$ layer of the GCN. Similarly, for each edge $e=(i,r,j,w)$ in the graph let $\uu^k_e\in\reals^d$ be the representation of the relation in this edge. These representations are calculated as follows. Initially we set:
%\begin{equation}
$\vv^0_i = \phiv_{o(i)}, \uu^0_e = \psiv_{r(e)}$,
%\end{equation}
where $r(e)$ is the relation for edge $e$. Next, we use three functions (MLPs) $F_s,F_r,F_o$, each from $\reals^D\times\reals^D\times\reals^D$ to $\reals^D$ to obtain an updated object representation (see Section~\ref{sec:supp:impl_details:sg-to-layout} for implementation details). These can be thought of as processing three vectors on an edge (the subject, relation and object representations) and returning three new representations. Given these functions, the updated object representation is the weighted average of all edges incident on $i$:\footnote{Note that a box can appear both as a ``subject'' and an ``object'' thus two different sums in the denominator and the normalization is needed because we want to obtain a new single object representation while the number of object occurrences is varied.}
\begin{equation}
    \vv_i^{t+1} = \frac{1}{c}\left[\sum_{e=(i,r,j,w)} w F_s(\vv_i^t,\uu_e^t,\vv_j^t)+  \sum_{e=(j,r,i,w)} w F_o(\vv_j^t,\uu_e^t,\vv_i^t)\right]    
\label{eq:supp:eq1}
\end{equation}
where $c$ is a normalizing constant $c=\sum_{e=(i,r,j,w)}w+\sum_{e=(j,r,i,w)}w$.
% where $\hat{w}$ is a normalized version of $w$. Namely:\amirg{Shouldn't it be $w$ without hat in the denominator?}
% \begin{equation}
%     \hat{w}_e = \frac{w_e}{\sum_{e=(i,r,j)} \hat{w}_e+\sum_{e=(j,r,i)} \hat{w}_e}
% \label{eq:eq2}
% \end{equation}
For the edge we set: $\uu_e^{t+1} = F_r(\vv_i^{t+1},\uu_e^t,\vv_j^{t+1})$. 

After iterating the GCN for $L$ updates, the layout for node $i$ is obtained by applying an MLP with four outputs to $\vv_i^L$.\footnote{The MLP has a sigmoid activation in the last layer so that the predicted normalized bounding box coordinates are in $[0,1]$.} Note that $F_s,F_r,F_o$ and $w$ depend on learned parameters which are optimized using gradient descent.

\subsection{Generalization on Packed Scenes}
\label{sec:supp:sg-to-layout:packed_scenes}

To further test the effect of model capacity from Table 1 in the paper, we even trained bigger Sg2Im models with $32,64$ layers on Packed COCO, resulting in IOU of $36.93, 11.65$. We also trained a Sg2Im model with $1024$ hidden units and $16$ layers, and IOU deteriorated to $37.01$. These results suggest that increasing the capacity of Sg2Im leads to overfitting and that WSGC improvement is indeed due to canonicalization.

%%%%%%%%%% Layout-to-image %%%%%%%%%%%%
\section{Layout-to-Image with AttSPADE}
\label{sec:supp:attspade}

\begin{figure*}[t!]
    \centering
    \includegraphics[width=\linewidth]{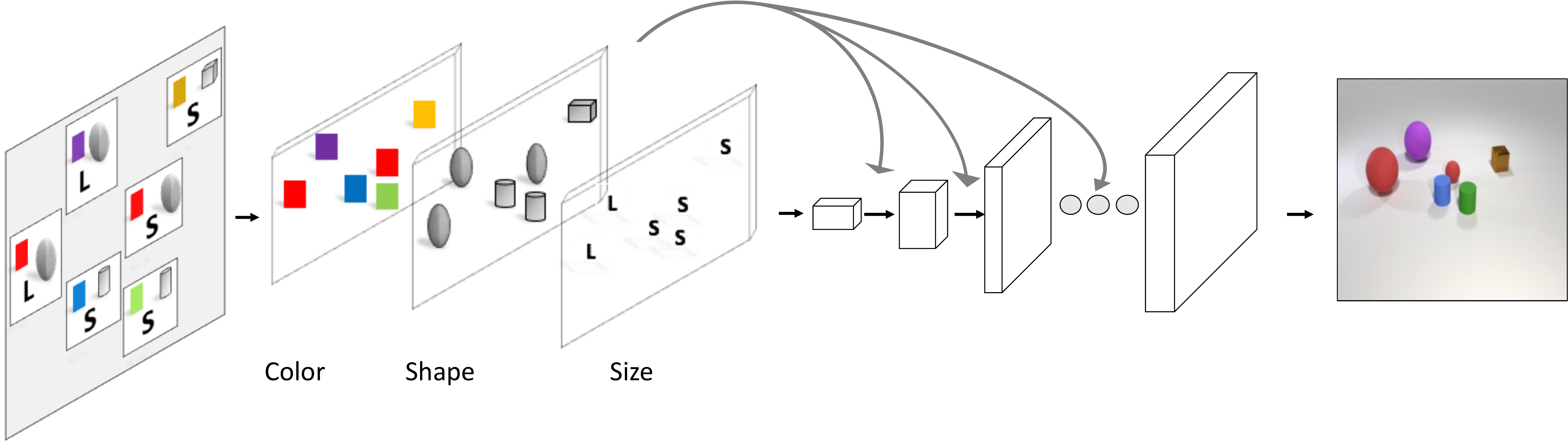}
    \caption{Generating images with our AttSPADE model. Given a layout of boxes, our model generates an image using the layout into a series of residual blocks with upsampling layers. The layout is modeled by multiple semantic attributes per box rather than a single class descriptor.}
    \label{fig:supp:attspade}
\end{figure*}

For the CLEVR dataset \cite{johnson2017clevr}, we use a novel generator, which we refer to as AttSPADE. This generator can be used for directly controlling attributes of the generated image, and this is not supported by other generators such as LostGAN. Although the generator is not the main focus of our contribution, we believe it is of independent interest, and thus describe it in some detail below, and show images that it generates.

\subsection{The AttSPADE Model}
The key idea in the AttSPADE model is to condition generation on the attribues, as opposed to only the object class as done in current models. In what follows we describe the model.
%We obtain support for attributes by extending standard layout-to-image models by using multiple semantic attributes per box rather than a single class descriptor. 

We consider the case where a bounding box has an associated set of attributes. For example, the object category is an attribute and the size is an attribute (with possible values ``small'', ``medium'' and ``large''). Additionally, if a segmentation mask is provided as input, it can be added as a binary attribute. We encode this set of attributes via a multi-hot vector $\zz\in\reals^r$ that is set to one for the corresponding attributes, and apply a FC layer to it to obtain a vector $\vv\in\reals^d$. Next, we construct a tensor $M \in \mathbb{R}^{d \times H \times W}$ where $H$ and $W$ are the boxes height and width and $M[:,i,j]=\vv$. This encodes the attributes for each pixel in the bounding box.\footnote{We note that different pixels may have different attributes in principle although we don't use this here.} Finally, we use $M$ as input to a SPADE~\cite{park2019semantic} generator to obtain the generated image. Thus, our approach simply replaces the input of the SPADE model (which is just an object mask) with the tensor $M$.

Lastly, our model uses two discriminators: one for the image (to achieve a better quality of the entire image), and one for the boxes (in order to better capture each box). This is similar to~\cite{ashual2019specifying,johnson2018image} but with a few modifications (see next section). A high level description of the architecture is shown in \figref{fig:supp:attspade}.

\subsection{The Loss Functions}
\label{sec:supp:attspade:loss}

Our AttSPADE model contains several modifications of the loss functions. First, the generator is trained with the same multi-scale discriminator and loss function used in pix2pixHD~\cite{wang2018pix2pixHD}, except we replace the squared error loss \cite{mao2017least} with the hinge loss \cite{lim2017geometric,miyato2018spectral,zhang2018self}. Second, since our layout-to-image model generates the image from a given layout of bounding boxes, we add a box term loss to guarantee that the generated objects in these boxes look real. For this purpose, we crop the bounding boxes to create object images and train the discriminator to discriminate between real object images and generated object images. The image discriminator is implemented as in SPADE~\cite{park2019semantic}.

\subsection{Baseline Models}
\label{sec:supp:attspade:baselines}
We report generation results that vary both the layout being used and the layout-to-image component. For the layout we consider three options: (1) Ground truth layout. (2) Our WSGC predicted layout. (3) The layout used in Sg2Im~\cite{johnson2018image}. For the image generation we use three options: (1) Our AttSpade generator. (2) The LostGAN generator \cite{Sun_2019_ICCV} (the most recent state-of-the-art generation model). (3) The Grid2Im \cite{ashual2019specifying} generator, which uses the same graph model as~\cite{johnson2018image}. The results reported in \cite{ashual2019specifying} use a coarse version of the GT layout (i.e., the layout rounded to a $5\times5$ grid). Since this variant comes close to actually using the GT layout, we also consider an additional version of \cite{ashual2019specifying} that does not use this information. We refer to this version as ``Grid2Im No-Att'' (code provided by the authors of \cite{ashual2019specifying}).

For a fair comparison, all models were tested  with the same external code evaluation metrics.

\begin{table*}[t!]
    \centering
    \resizebox{\textwidth}{!}{
    \begin{tabular}{c|cc|cc|cc|cc}
    \multirow{2}{*}{\textbf{Resolution}} & \multicolumn{2}{c}{\textbf{Methods}} & \multicolumn{2}{c}{\textbf{Inception Score}} & \multicolumn{2}{|c}{\textbf{FID}} & \multicolumn{2}{|c}{\textbf{Diversity Score}} \\
    & \textbf{SG-to-Layout} & \textbf{Layout-to-Image} & \textbf{COCO} & \textbf{VG} & \textbf{COCO} & \textbf{VG} & \textbf{COCO} & \textbf{VG}\\ 
    \hline

    % 128x128
    \multirow{6}{*}{128x128} & Real Images & Real Images & 23.0 $\pm$ 0.4 & 22.8 $\pm$ 1.7 & - & - & - & - \\
    \cline{2-9}
    % & Pix2pix~\cite{pix2pix2017} &   3.5 $\pm$ 0.1 & 2.7 $\pm$ 0.02 &121.97 &142.86 & 0 & 0 \\
    & GT Layout & Grid2Im~\cite{ashual2019specifying} &  12.5 $\pm$ 0.3 & - & 59.5 & - & - & - \\
    % & \roeih{del} LostGAN~\cite{Sun_2019_ICCV} &  13.8 $\pm$ 0.4 & 11.1 $\pm$ 0.6 & 29.65 & 29.36 & 0.40 $\pm$ 0.09 & 0.43 $\pm$ 0.09 \\
    & GT Layout & LostGAN~\cite{Sun_2019_ICCV} & 11.8 $\pm$ 0.3 & 8.9 $\pm$ 0.3 & 64.0 & 66.7 & \textbf{0.57 $\pm$ 0.06} & \textbf{0.59 $\pm$ 0.06} \\ 
    & GT Layout & AttSPADE (Ours) &\textbf{15.6 $\pm$ 0.5} & \textbf{11.7 $\pm$ 0.8} & \textbf{54.7} & \textbf{36.4} & 0.44 $\pm$ 0.09 & 0.51 $\pm$ 0.08 \\ 
    % & Ours GT Layout OLD & 13.3 $\pm$ 0.7 & - & - & - & - & - \\
    
    \cline{2-9}
    & WSGC & LostGAN~\cite{Sun_2019_ICCV} & \textbf{11.1 $\pm$ 0.6} & 8.1 $\pm$ 0.3 & \textbf{65.9} & 73.4 & 0.57 $\pm$ 0.06 & 0.58 $\pm$ 0.06 \\
    & Sg2Im~\cite{johnson2018image} & Grid2Im~\cite{ashual2019specifying} &10.4 $\pm$ 0.4 & - & 75.4 & - & - & - \\
    & WSGC & AttSPADE (Ours) & 10.8 $\pm$ 0.5 & \textbf{10.0 $\pm$ 0.7} & 73.8 & \textbf{46.4} & \textbf{0.57 $\pm$ 0.06} & \textbf{0.58 $\pm$ 0.06} \\ 
    \hline
    
    % 256x256
    \multirow{6}{*}{256x256} & Real Images & Real Images & 30.3 $\pm$ 1.4 & 31.7 $\pm$ 2.0 & - & - & - & - \\
    \cline{2-9}
    & GT Layout & Grid2Im~\cite{ashual2019specifying} & 16.4 $\pm$ 0.7 & - & 65.2 & - & 0.48 $\pm$ 0.09 & - \\
    & GT Layout & AttSPADE (Ours) & \textbf{19.5 $\pm$ 0.9} & \textbf{16.9 $\pm$ 1.2} & \textbf{64.65} & \textbf{42.9} & \textbf{0.55 $\pm$ 0.11} & \textbf{0.62 $\pm$ 0.08} \\
    % & Ours &GT Layout OLD & 17.5 $\pm$ 0.9 & - & - & - & - & - \\
    
    \cline{2-9}
    % & \roeih{del} Grid2Im~\cite{ashual2019specifying} & Grid2Im~\cite{ashual2019specifying} & 14.5 $\pm$ 0.7 & - & 81.0 & - & 0.67 $\pm$ 0.05 & - \\
    & Sg2Im~\cite{johnson2018image} & Grid2Im~\cite{ashual2019specifying} No-Att & 6.6 $\pm$ 0.3 & - & 127.0 & - & 0.65 $\pm$ 0.05 & - \\
    & WSGC & AttSPADE (Ours) &\textbf{13.9 $\pm$ 0.3} & \textbf{16.5 $\pm$ 0.7} & \textbf{119.1} & \textbf{45.7} & \textbf{0.70 $\pm$ 0.07} & \textbf{0.68 $\pm$ 0.07} \\
    
    \end{tabular}}
    \caption{Quantitative comparisons for SG-to-image methods using Inception Score (higher is better), FID (lower is better) and Diversity Score (higher is better). Evaluation is done on the COCO-Stuff and VG datasets.}
    \label{tab:supp:gen_results} 
    % \vspace{-2mm}
\end{table*}

\subsection{Results}
\label{sec:supp:attspade:results}
The results in Table \ref{tab:supp:gen_results} suggest that the AttSPADE model improves over previous approaches~\cite{ashual2019specifying,Sun_2019_ICCV} when generating an image from a GT layout, in both resolutions. In addition, our end-to-end model, which includes the WSGC and AttSPADE model, outperforms most of the baselines on the COCO and Visual Genome datasets. 

\figref{fig:supp:coco128} shows a direct comparison between different generators using GT layout for COCO. It can be seen that AttSPADE provides higher quality images than the other generators.

\figref{fig:supp:coco256} shows different generators that use both GT and generated layouts for COCO. Additional qualitative results on Visual Genome can be seen in \figref{fig:supp:vg128_compare1} and \figref{fig:supp:vg256_compare1}. In the generation results it can be seen that AttSPADE produces more realistic images, when compared to other generators. Furthermore when using WSGC layout the images are qualitatively similar to using GT layout, which suggests that WSGC produces high quality layouts.

%%%%%%%%%%%%%%%% Datasets  %%%%%%%%%%%%%%%%
\section{Datasets}
\label{sec:supp:datasets}

\subsection{Synthetic dataset}
\label{sec:supp:datasets:synthetic}
% \secref{sec:exp}
In Section 6, a synthetic dataset which was used to explore properties of the suggested WSGC model was presented. Example cases from this dataset are included in Figure~\ref{fig:supp:synthetic}. More specifically, this dataset was utilized to evaluate the contribution of the transitivity closure on the scene-graph-to-layout task. 

Every object in this data is a square with one of two possible sizes, $small$ or $large$. The set of relations includes:
\begin{itemize}
    \item $Above$ - The center of the subject is above the object. This relation is transitive.
    \item $Opposite Horizontally$ - The subject and the object are on opposite sides of the image with respect to the middle vertical line. This relation is not transitive.
    \item $X Near$ - The subject and object are within distance equal to $10\%$ of the image with respect to the $x$ coordinate of each center. This relation is not transitive.
\end{itemize}

To generate SG-layout pairs for training and evaluation, we uniformly sample coordinates of object centers and object sizes and automatically compute relations among object pairs based on their spatial locations.

\begin{figure}[t!]
    \centering
    \includegraphics[width=1\linewidth]{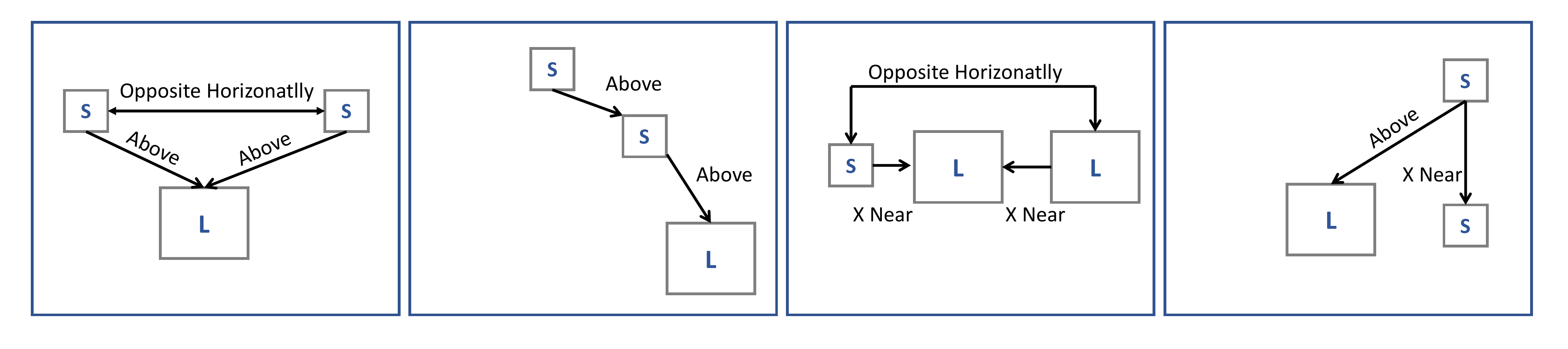}
    \caption{Example of synthetic dataset samples. In these samples, the scene graph relations are overlaid on top of the  ground truth layout. Every edge is described with a corresponding relation type and every square object is annotated with an object type: "S" for small and "L" for large.}
    \label{fig:supp:synthetic}
\end{figure}

\subsection{Packed Datasets} 
\label{sec:supp:datasets:gen}

Here we describe the specific characteristics of the packed datasets presented in the paper. For every packed dataset, only samples with at least $16$ objects per image were included. The method for constructing relations for COCO and CLEVR is as described next. For VG, since Standard VG contains a limited number of relations we supplement the dataset with relations as follows. For every two graph nodes, edges representing geometric relations such as:``left", ``right", ``above", ``below", ``inside" and ``surrounding" are constructed based on relative (x,y) coordinates. Redundant edges are removed such that the graph is minimal. This procedure differs from the one used in \cite{johnson2018image} in two ways: first, in \cite{johnson2018image}, the decision to construct such edges is based on angles between two objects and second, in \cite{johnson2018image}, there can be up to a single constructed edge for every pair of objects and the decision whether to construct or not is random. Hence, the procedure proposed here results in graphs that are more complex w.r.t number of edges and are more informative.

%%%%%%%%%%%%%%%% Implementation Details %%%%%%%%%%%%%%%%
\section{Implementation Details}
\label{sec:supp:impl_details}
%For optimization we use Adam \cite{kingma2014adam}, where for $\theta^{conv},\theta^{trans}$ we use LR of $1e^{-2}$ and otherwise we use $1e^{-4}$. For the GCN we use 5 hidden layers and an embedding layer of 128 units for each object and relation. For more details see the Supplementary. All experiments use {\softfastmethod} unless otherwise specified.
%\amirg{Move all of this to the supp where we hvae implemenetation details already.} 
\subsection{Scene-Graph-to-layout}
\label{sec:supp:impl_details:sg-to-layout}
In the WSGC GCN model, we follow the implementation details proposed in ~\cite{johnson2018image}. We use 5 hidden layers and an embedding layer of 128 units for each object and relation. The functions $F_s,F_r,F_o$ which were presented in Section 4, are all implemented as a single $3$ layers MLP with $512$ units per layer. For optimization we use Adam \cite{kingma2014adam}, where for $\theta^{conv},\theta^{trans}$ we use LR of $1e^{-2}$ and otherwise we use $1e^{-4}$.

\subsection{AttSPADE}
\label{sec:supp:impl_details:attspade}
We apply Spectral Norm~\cite{miyato2018spectral} to all the layers in both generator and discriminator. We use the ADAM solver~\cite{kingma2015adam} with $\beta_1=0.5$ and $\beta_2=0.999$, and a learning rate of $0.0001$ for both the generator and the discriminator. All the experiments are conducted on NVIDIA V100 GPUs. We use PyTorch synchronized BatchNorm with the following batch sizes: $32$ for $ 128 \times 128$ and $16$ for $256 \times 256$ resolutions (statistics are collected from all the GPUs). The FC layer that calculates $\vv\in\reals^d$ (used to construct tensor $M$. See Section~\ref{sec:supp:attspade}), is set $d$ to $128$.

%%%%%%%%%%%%%%%% Lemma Proof %%%%%%%%%%%%%%%%

\section{Proof that SGC outputs the closure $C(E)$ (Section 3.1)}
\label{sec:supp:proof}
\begin{lemma} 
The SGC procedure described in Section 3.1 of the main paper outputs the closure $C(E)$.
\end{lemma}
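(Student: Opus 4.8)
The plan is to prove the two inclusions $C(E) \subseteq \text{SGC}(E)$ and $\text{SGC}(E) \subseteq C(E)$ separately, where $C(E)$ is the semantic closure from the Definition and $\text{SGC}(E)$ is the output of the syntactic procedure (Initialization, Converse Completion, Transitive Completion).

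For the \emph{soundness} direction $\text{SGC}(E) \subseteq C(E)$, I would show that every edge the procedure adds is logically implied by $E$ together with $\cF$ (the conjunction of the transitivity formulas for $\cR_{trans}$ and the converse formulas for $\cR_{conv}$). This is an easy induction on the order in which edges are produced. An edge present at Initialization is in $E$, hence trivially true in every SG extending $E$. An edge $(j,r',i)$ added by Converse Completion comes from some $(i,r,j)$ already known to be implied, together with the instance $r(i,j)\implies r'(j,i)$ of the converse formula; so it too is implied. An edge $(i,r,k)$ added by Transitive Completion lies on a directed $r$-path $i = v_0, v_1, \dots, v_m = k$ all of whose edges are already implied, and repeated application of the transitivity formula for $r$ gives $r(i,k)$. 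Hence $(i,r,k)$ holds in every model of $E \wedge \cF$, i.e.\ it is in $C(E)$.

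For the \emph{completeness} direction $C(E) \subseteq \text{SGC}(E)$, the cleanest route is to exhibit a single SG $G^\star$ that contains $E$, satisfies $\cF$, and whose relation set is exactly $\text{SGC}(E)$; then by definition of $C(E)$ (the set of relations true in \emph{any} such SG) we get $C(E)\subseteq \text{SGC}(E)$. So the core lemma is: \emph{$\text{SGC}(E)$ itself, viewed as a set of triplets, satisfies all formulas in $\cF$}. One must check (i) for each $(r,r')\in\cR_{conv}$, if $(i,r,j)\in\text{SGC}(E)$ then $(j,r',i)\in\text{SGC}(E)$; and (ii) for each $r\in\cR_{trans}$, the graph $\text{SGC}_r(E)$ is transitively closed. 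The subtlety — and what I expect to be the main obstacle — is that the procedure performs Converse Completion \emph{before} Transitive Completion and does not iterate: one must argue that the transitive edges introduced in the last step do not create new violations of the converse formulas (or of transitivity for a \emph{different} relation). Here the structural assumptions rescue us. Converse pairs are symmetric ($(r,r')\in\cR_{conv}\iff(r',r)\in\cR_{conv}$), and if $r$ is transitive one should check that its converse $r'$ is handled consistently: a transitive $r$-path from $i$ to $k$ maps, edge-by-edge via Converse Completion, to an $r'$-path from $k$ to $i$, so the transitive closure of $r$ and of $r'$ are ``converse-compatible'' and running Transitive Completion on both simultaneously preserves condition (i). Condition (ii) is immediate because Floyd--Warshall on $C_r(E)$ returns a transitively closed graph by construction, and no \emph{later} step adds further $r$-edges.

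To organize the write-up I would (1) state and prove the soundness induction in one short paragraph; (2) prove the stability lemma that $\text{SGC}(E)$ satisfies $\cF$, isolating the converse/transitive interaction as its own claim and invoking the symmetry of $\cR_{conv}$ and the edge-by-edge conversion of paths; (3) conclude completeness by taking $G^\star$ to be $(O,\text{SGC}(E))$, noting $E\subseteq\text{SGC}(E)$ and $G^\star\models\cF$, so every triplet forced by $C(E)$ already appears in $G^\star$. Combining (1) and (3) gives $\text{SGC}(E)=C(E)$. The only place real care is needed is step (2); everything else is routine.
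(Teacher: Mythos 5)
Your soundness direction ($\text{SGC}(E)\subseteq C(E)$) coincides with the paper's: both argue, by cases on how each edge was produced (original, converse-completed, transitive-completed), that every added edge is implied by $E$ together with $\cF$. For completeness, however, you take a genuinely different route. The paper argues directly: given $e=(i,r,j)\in C(E)$ with $e\notin E$, it assumes $e\notin \hat C(E)$, picks a directed $r$-path from $i$ to $j$ in $C(E)$, and derives a contradiction edge by edge; this implicitly assumes every implied edge arises from at most one converse step followed by one transitive step, so it is somewhat informal about derivations of arbitrary depth. You instead exhibit $G^\star=(O,\text{SGC}(E))$ as a single SG containing $E$ and satisfying $\cF$, so that $C(E)$, being by definition the set of relations true in \emph{every} such SG, is automatically contained in $\text{SGC}(E)$. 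This is the standard least-fixed-point characterization of a closure, and it cleanly isolates the one real obligation: the stability claim that $\text{SGC}(E)$ satisfies $\cF$ even though converse completion is run once, before transitive completion, without iterating. You correctly identify the crux there --- the transitive completion of $r$ must not create $r$-edges whose $r'$-converses are missing --- and your resolution (the reversed path is an $r'$-path, so the transitive completion of $r'$ supplies the converse edge) is right, but note that it silently requires the converse of a transitive relation to be transitive as well: if $r\in\cR_{trans}$ but $r'\notin\cR_{trans}$ for $(r,r')\in\cR_{conv}$, the lemma as stated actually fails on a two-edge $r$-chain, and the paper's proof glosses over the same case. Under that natural assumption both arguments go through; yours buys a completeness proof that does not depend on the shape of logical derivations, at the price of having to prove stability explicitly.
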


\begin{proof}
Let $G=(O, E)$. Denote $\hat{C}$ be the canonicalization procedure proposed. To show $\hat{C}(E)=C(E)$, it suffices to prove that (1) $C(E) \subseteq \hat{C}(E)$ and (2) $\hat{C}(E) \subseteq C(E)$.

~\newline\noindent
{Proof that $\hat{C}(E) \subseteq {C}(E)$:}. Let there be $e \in \hat{C}(E)$ s.t $e = (i, r, j)$. We split into cases by $e$ construction:
        \begin{itemize}
            \item \textbf{Original graph edge}. if $e \in E$ then by $C$ definition $e \in C(E)$.
            \item \textbf{Converse constructed edge}. Therefore there exists $r' \in \cR$ such that $(r,r')\in \cR_{conv}$ and $(j, r', i)\in E$. Then $(j, r', i)\in C(E)$ and therefore $(i, r, j)=e\in C(E)$ by definition.
            \item \textbf{Transitive constructed edge}. Since $e$ was constructed in the $Transitivity$ step, it must hold that $r \in \cR_{trans}$ and $e$ was contained in the transitive closure of $r$. Therefore, after the $Converse Relations$ step, there existed a directed path $p=(o_{v_1}, ..., o_{v_k})$ with respect to $r$ where $v_1=i$ and $v_k = j$.  To prove $e \in C(E)$, it is enough to show that for every edge in $p$ it is also in $C(E)$. From here, since $C$ respects transitivity, this will follow. Namely, let there be $e'=(i', r, j') \in \{(o_{v_m}, o_{v_{m+1}}) | m \in \{1, .., k\}\}$. If $e' \in E$, then $e' \in C(E)$ and we are done. Otherwise, by the $Converse Relations$ construction step, there exists $r'$ such that $(r, r')\in \cR_{conv}$ and $(j', r', i')\in E$. Therefore, it follows that $(j', r', i')\in C(E)$ and $e' \in C(E)$ and we are done.
        \end{itemize}
~\newline\noindent
{Proof that $C(E) \subseteq \hat{C}(E)$:} For every $e=(i, r, j) \in C(E)$ we need to show that $e \in \hat{C}(E)$. 
    Since $e \in C(E)$, $e$ is a relation implied by $E$. If $e \in E$, since $\hat{C}$ does not drop edges, it holds that $e\in\hat{C}(E)$ and we're done. Otherwise, we assume by contradiction that $e \notin \hat{C}(E).$ let $p = (o_{v_1}, ..., o_{v_k})$ be a directed path from $o_i$ to $o_j$ in $C(E)$. Then, there exists $e'=(i',r,j') \in  \{(o_{v_{i}}, o_{v_{i+1}})|i \leq k\}$ where $e' \notin \hat{C}(E)$. Otherwise, if there is no such $e'$, we get that there is a directed path between $o_i$ to $o_j$ and by $Transitivity$ step construction $e \in \hat{C}(E)$. Therefore, there must be $e_{conv} \in E$, such that $e_{conv} = (j,r',i)$ and $(r, r') \in \cR_{conv}$. However, from the $Converse Relations$ step construction, if there exists such edge we get that $e \in \hat{C}(E)$, in contrary to the assumption that $e \notin \hat{C}(E)$.
\end{proof}

% %%%%%%%%%%%%%%%% Synthetic Examples %%%%%%%%%%%%%%%%

\section{Generalization on Semantically Equivalent Graphs}
\label{sec:supp:semantic_equiv_graphs}
% Table~\ref{tab:canonical}
Results in Table 2 of the main paper demonstrate that the learned WSGC model is more robust to changes in the scene graph input. In this experiment, we randomly transform each test sample scene graph into a semantically equivalent one, and test models on the resulting sample. To generate such samples from a given scene graph, we start by calculating all the possible location-based relations for any pair of objects. Then, for each pair of objects we use prior knowledge to identify pairs of converse relations, and drop one of the edges in such pairs with probability $p=0.5$. After this step, we compute the transitive closure with respect to each relation and randomly drop ($p=0.5$) each edge that does not change the semantics of the scene graph.

%%%%%% COCO 128 only GT Layout New %%%%%%
\begin{figure*}[t!]
    \centering
    \includegraphics[width=\linewidth]{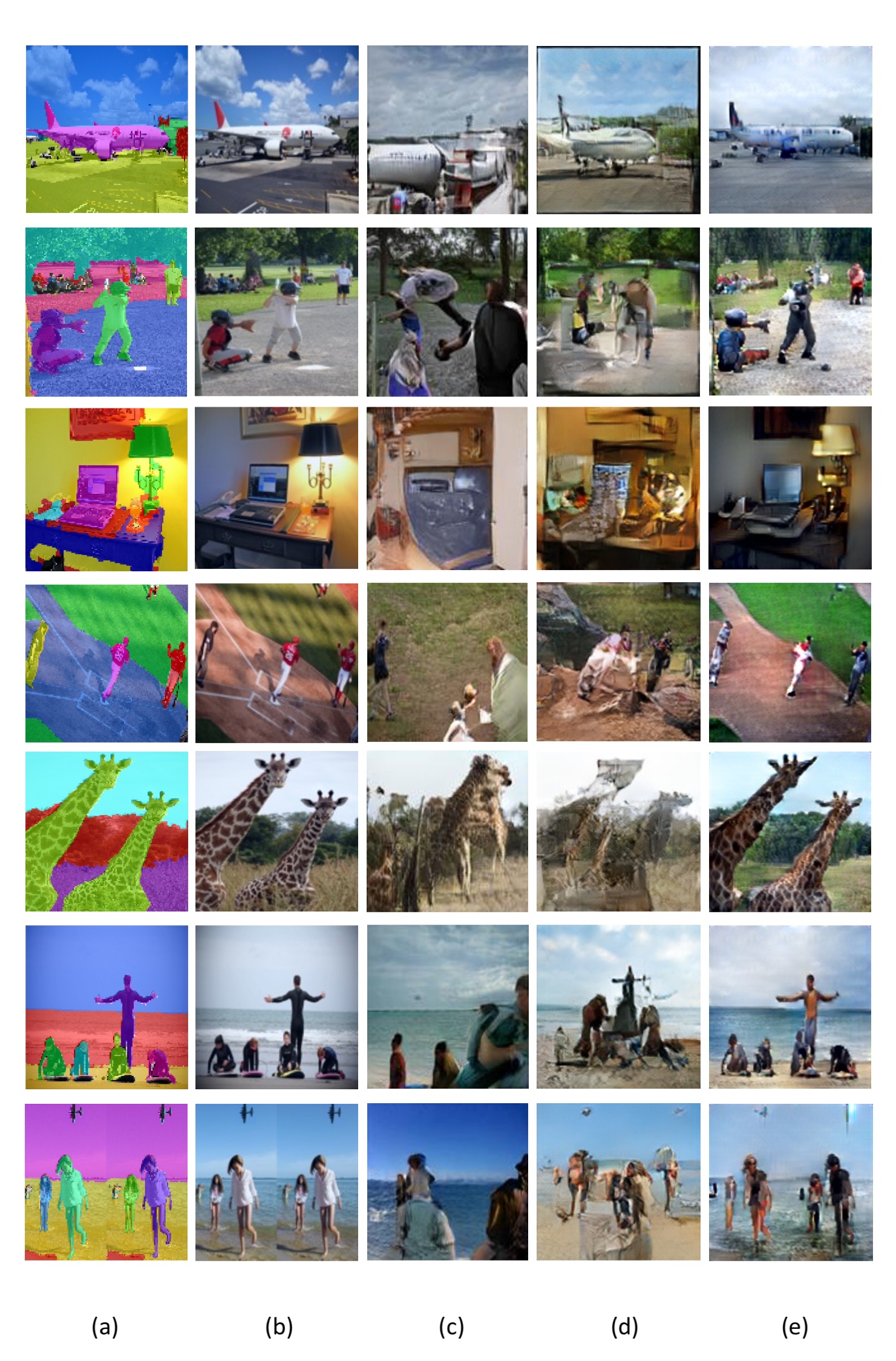}
    \caption{Selected GT layout-to-image generation results on COCO-Stuff dataset on $128\times128$ resultion. Here, we compare our AttSPADE model, Grid2Im~\cite{ashual2019specifying} and LostGAN~\cite{Sun_2019_ICCV} on generation from GT layout of masks. (a) GT layout (only masks). (b) GT image. (c) Generation with LostGAN~\cite{Sun_2019_ICCV} model. (d) Generation with Grid2Im~\cite{ashual2019specifying}. (e) Generation with AttSPADE model (ours).}
    \label{fig:supp:coco128}
\end{figure*}

%%%%%% COCO 256 GT + pred layout New %%%%%%
\begin{figure*}[t!]
    \centering
    \includegraphics[width=\linewidth]{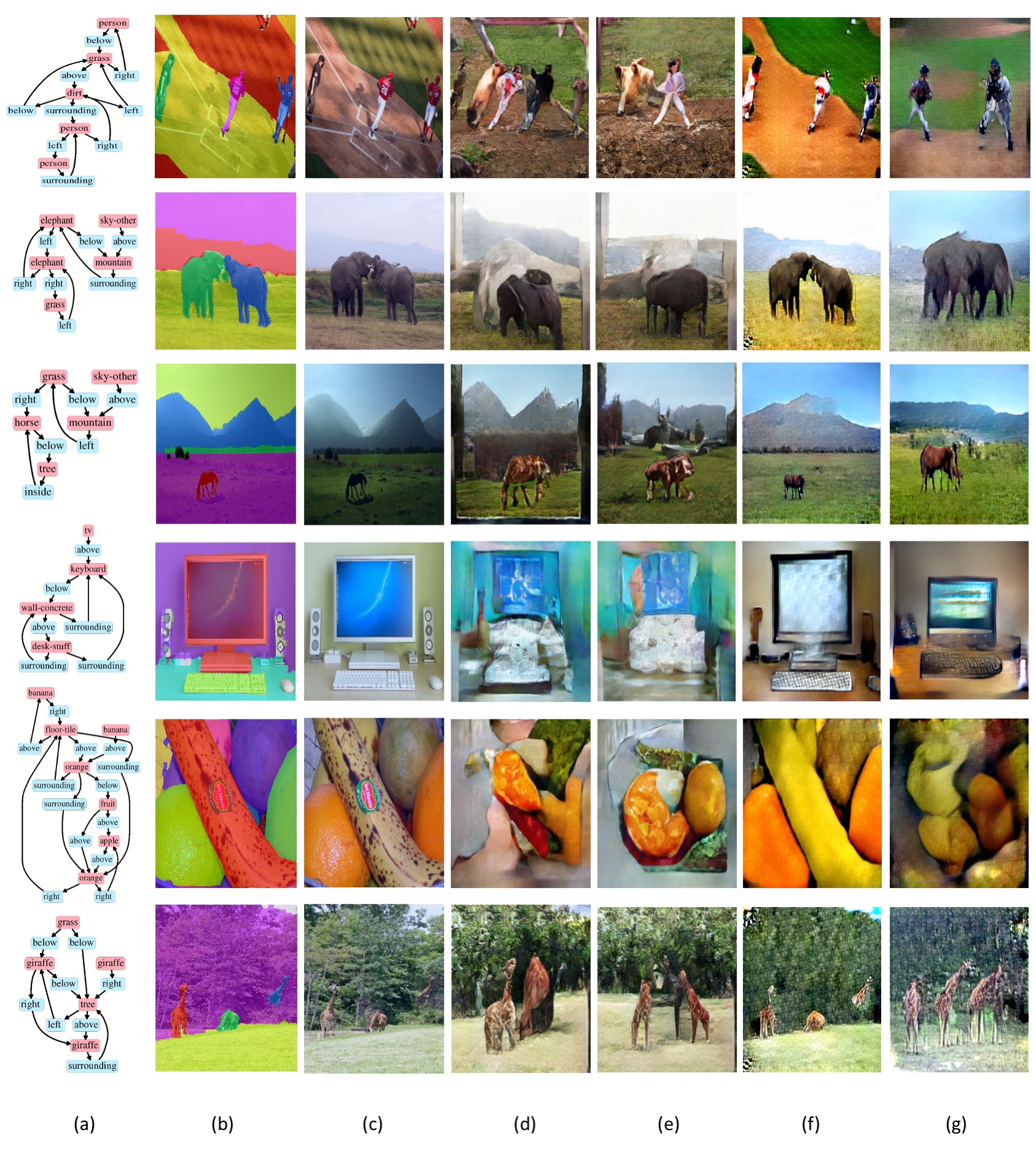}
    \caption{Selected generation results on the COCO-Stuff dataset at $256\times256$ resolution. Here, we compare our AttSPADE model and Grid2Im~\cite{ashual2019specifying} in two different settings: generation from GT layout of masks and generation from scene graphs. (a) GT scene graph. (b) GT layout (only masks). (c) GT image. (d) Generation with Grid2Im~\cite{ashual2019specifying} using the GT layout. (e) Generation with Grid2Im No-att \cite{ashual2019specifying}  from the scene graph (GT layout not used). (f) Generation with AttSPADE model (ours) using the GT layout. (g) Generation with WSGC + AttSPADE model (ours) from the scene graph (GT layout not used).}
    \label{fig:supp:coco256}
\end{figure*}

% Selected generation results on the COCO-Stuff dataset at $256\times256$ resolution. Here the GT layout also includes masks. (a) GT scene graph. (b) GT layout. (c) GT image. (d) Generation with Grid2Im~\cite{ashual2019specifying} using the GT layout. (e) Generation with Grid2Im No-att \cite{ashual2019specifying}  from the scene graph (GT layout not used). (f) Generation with AttSPADE model (Ours) using the GT layout. (g) Generation with WSGC + AttSPADE model (Ours) from the scene graph (GT layout not used).

%%%%%% VG 128 GT + pred layout NEW %%%%%%
\begin{figure*}[t!]
    \centering
    \includegraphics[width=\linewidth]{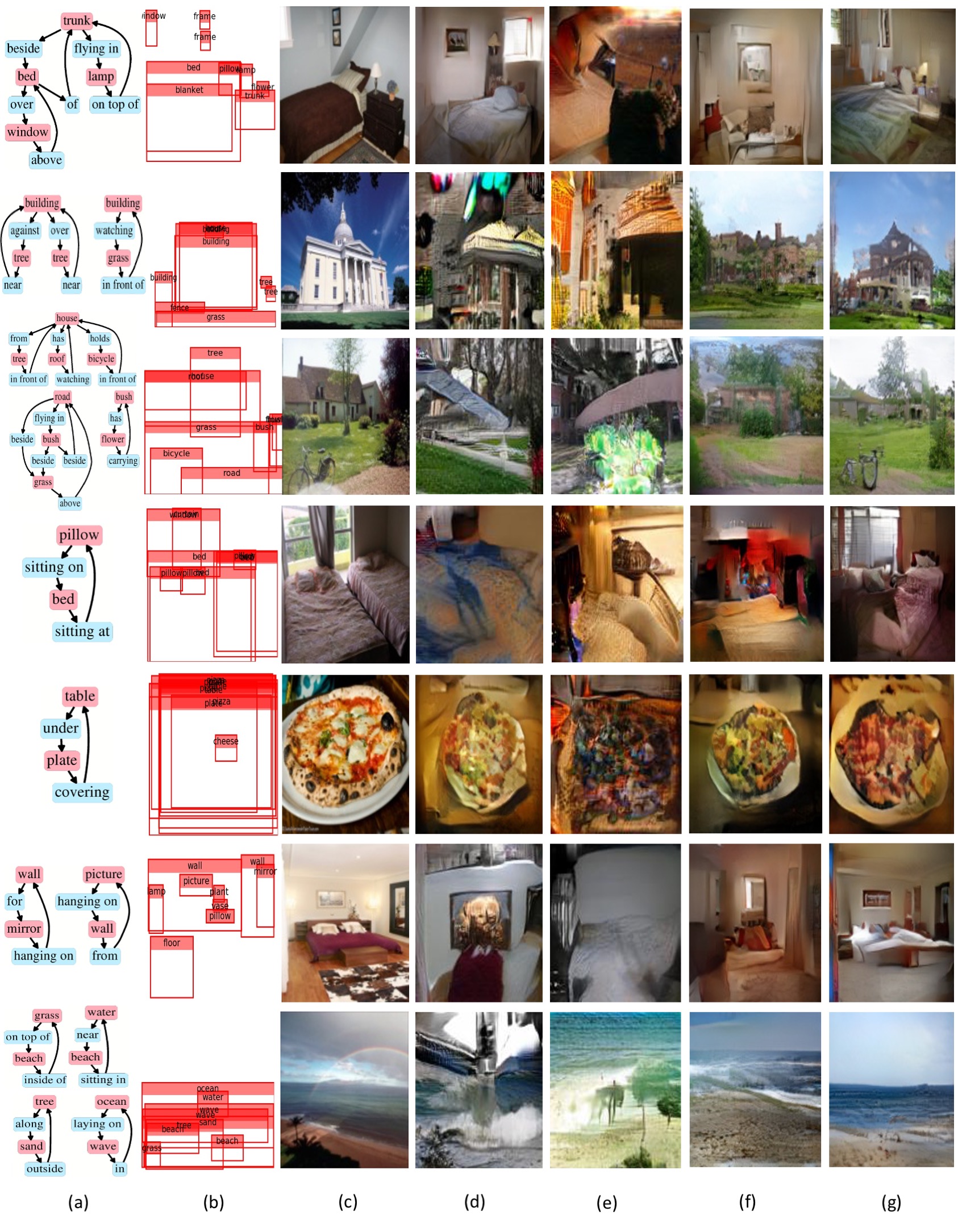}
    \caption{Selected scene-graph-to-image results on Visual Genome dataset on $128\times128$ resolution. Here, we compare our AttSPADE model and LostGAN~\cite{Sun_2019_ICCV} in two different settings: generation from GT layout of boxes and generation from scene graphs. (a) GT scene graph. (b) GT layout (only boxes). (c) GT image. (d) Generation using LostGAN~\cite{Sun_2019_ICCV} from the GT layout. (e) Generation with the WSGC + LostGAN~\cite{Sun_2019_ICCV} from the scene graph (GT layout not used). (f) Generation with the AttSPADE model (ours) from the GT Layout. (g) Generation with the WSGC + AttSPADE model (ours) from the scene graph (GT layout not used).}
    \label{fig:supp:vg128_compare1}
\end{figure*}

% Selected scene-graph-to-image results on Visual Genome dataset on $128\times128$ resolution. Here, the GT layout includes only boxes. (a) GT scene graph. (b) GT layout. (c) GT image. (d) Generation using LostGAN~\cite{Sun_2019_ICCV} using the GT layout. (e) Generation with the WSGC + LostGAN~\cite{Sun_2019_ICCV} using the scene graph (GT layout not used). (f) Generation with the AttSPADE model (Ours) using the GT Layout. (g) Generation with the WSGC + AttSPADE model (Ours) using the scene graph (GT layout not used).

%%%%%% VG 256 GT + pred layout NEW %%%%%%
\begin{figure*}[t!]
    \centering
    \includegraphics[width=\linewidth]{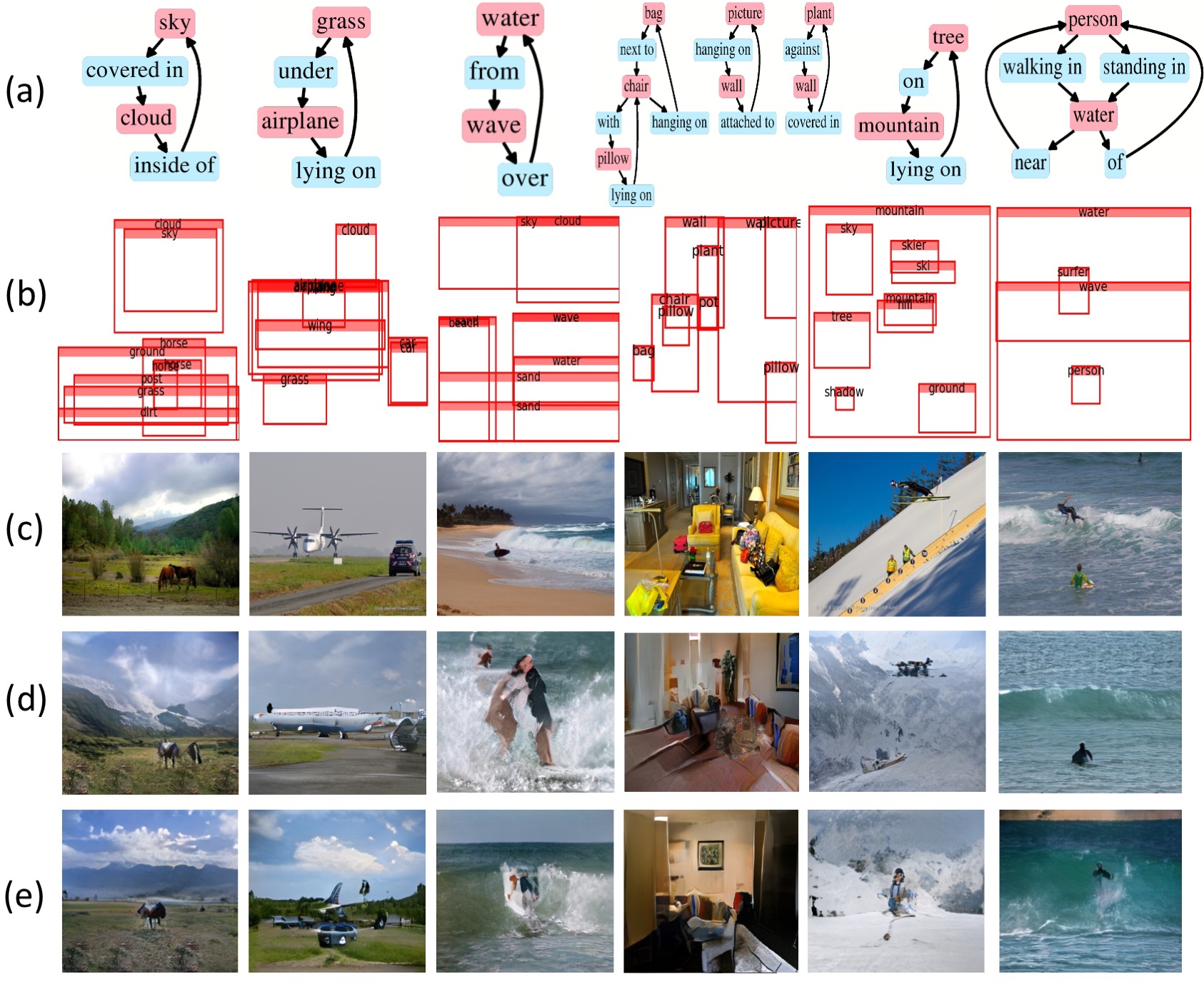}
    \caption{Selected scene-graph-to-image results on the Visual Genome dataset at $256\times256$ resolution. Here, we test our AttSPADE model in two different settings: generation from GT layout of boxes and generation from scene graphs. (a) GT scene graph. (b) GT layout (only boxes). (c) GT image. (d) Generation with the AttSPADE model (ours) from the GT Layout. (e) Generation with the WSGC + AttSPADE model (ours) from the scene graph (GT layout not used).}
    \label{fig:supp:vg256_compare1}
\end{figure*}

% Selected scene-graph-to-image results on the Visual Genome dataset at $256\times256$ resolution. Here, the GT layout includes only boxes. (a) GT scene graph. (b) GT layout. (c) GT image. (d) Generation with the AttSPADE model (Ours) using GT Layout. (e) Generation with the WSGC + AttSPADE model (Ours) using the scene graph (GT layout not used).

\end{document}